\newtheorem{theorem}{{\bf Theorem}}
\newtheorem{example}{{\bf Example}}
\newtheorem{proposition}{{\bf Proposition}}
\newtheorem{definition}{{\bf Definition}}
\newtheorem{lemma}{{\bf Lemma}}
\newtheorem{remark}{{\bf Remark}}
\newcommand{\auc}{\rm  AUC}
\newcommand{\iauc}{\rm  IAUC}
\newcommand{\roc}{\rm  ROC}
\newcommand{\iroc}{\rm  IROC}
\def\vus{{\rm VUS}}
\def\X{{\mathcal X}}
\title{Ranking Data with Continuous Labels \\through Oriented Recursive Partitions}
\author{
  Stephan Cl\'emen\c{c}on \hspace{2em} Mastane Achab\\
  LTCI, T\'el\'ecom ParisTech\\
  75013 Paris, France\\
  \texttt{first.last@telecom-paristech.fr}\\
}
\begin{document}

\date{}
\maketitle

\begin{abstract} We formulate a supervised learning problem, referred to as \textit{continuous ranking}, where a continuous real-valued label $Y$ is assigned to an observable r.v.\ $X$ taking its values in a feature space $\mathcal{X}$ and the goal is to order all possible observations $x$ in $\mathcal{X}$ by means of a scoring function $s:\mathcal{X}\rightarrow \mathbb{R}$ so that $s(X)$ and $Y$ tend to increase or decrease together with highest probability. This problem generalizes \textit{bi/multi-partite ranking} to a certain extent and the task of finding optimal scoring functions $s(x)$ can be naturally cast as optimization of a dedicated functional criterion, called the $\iroc$ curve here, or as
maximization of the Kendall $\tau$ related to the pair $(s(X),Y)$. From the theoretical side, we describe the optimal elements of this problem and provide statistical guarantees for empirical Kendall $\tau$ maximization under appropriate conditions for the class of scoring function candidates. We also propose a recursive statistical learning algorithm tailored to empirical $\iroc$ curve optimization and producing a piecewise constant scoring function that is fully described by an oriented binary tree. Preliminary numerical experiments highlight the difference in nature between \textit{regression} and \textit{continuous ranking} and provide strong empirical evidence of the performance of empirical optimizers of the criteria proposed.

\end{abstract}

\section{Introduction}
The predictive learning problem considered in this paper can be easily stated in an informal fashion, as follows. Given a collection of objects of arbitrary cardinality, $N\geq 1$ say, respectively described by characteristics $x_1,\; \ldots,\; x_N$ in a feature space $\mathcal{X}$, the goal is to learn how to order them by increasing order of magnitude of a certain unknown continuous variable $y$. To fix ideas, the attribute $y$ can represent the 'size' of the object and be difficult to measure, as for the physical measurement of microscopic bodies in chemistry and biology or the cash flow of companies in quantitative finance and the features $x$ may then correspond to \textit{indirect measurements}. The most convenient way to define a preorder on a feature space $\mathcal{X}$ is to transport the natural order on the real line onto it by means of a (measurable) scoring function $s:\mathcal{X}\rightarrow \mathbb{R}$: an object with charcateristics $x$ is then said to be 'larger' ('strictly larger', respectively) than an object described by $x'$ according to the scoring rule $s$ when $s(x')\leq s(x)$ (when $s(x)<s(x')$). Statistical learning boils down here to build a scoring function $s(x)$, based on a \textit{training} data set $\mathcal{D}_n=\{(X_1,Y_1),\; \ldots,\; (X_n,Y_n)  \}$ of objects for which the values of all variables (direct and indirect measurements) have been jointly observed, such that $s(X)$ and $Y$ tend to increase or decrease together with highest probability or, in other words, such that the ordering of new objects induced by $s(x)$ matches that defined by their true measures as well as possible.
This problem, that shall be referred to as \textit{continuous ranking} throughout the article can be viewed as an extension of \textit{bipartite ranking}, where the output variable $Y$ is assumed to be binary and the objective can be naturally formulated as a functional $M$-estimation problem by means of the concept of $\roc$ curve, see \cite{cv09ieee}. Refer also to \cite{CLV05}, \cite{FISS03}, \cite{AGHHPR05} for approaches based on the optimization of summary performance measures such as the $\auc$ criterion in the binary context. Generalization to the situation where the random label is ordinal and may take a finite number $K\geq 3$ of values is referred to as \textit{multipartite ranking} and has been recently investigated in \cite{CRV13} (see also \textit{e.g.} \cite{Agarwal2}), where distributional conditions guaranteeing that $\roc$ surface and the $\vus$ criterion can be used to determine optimal scoring functions are exhibited in particular.

It is the major purpose of this paper to formulate the \textit{continuous ranking} problem in a quantitative manner and explore the connection between the latter and bi/multi-partite ranking. Intuitively, optimal scoring rules would be also optimal for any bipartite subproblem defined by thresholding the continuous variable $Y$ with cut-off $t>0$,  separating the observations $X$ such that $Y<t$ from those such that $Y>t$. Viewing this way \textit{continuous ranking} as a continuum of nested bipartite ranking problems, we provide here sufficient conditions for the existence of such (optimal) scoring rules and we introduce a concept of \textit{integrated $\roc$ curve} ($\iroc$ curve in abbreviated form) that may serve as a natural performance measure for continuous ranking, as well as the related notion of \textit{integrated $\auc$ criterion}, a summary scalar criterion, akin to Kendall tau. Generalization properties of empirical Kendall tau maximizers are discussed in the Supplementary Material.
The paper also  introduces a novel recursive algorithm that solves a discretized version of the empirical \textit{integrated $\roc$ curve} optimization problem,
producing a scoring function that can be computed by means of a hierarchical combination of binary classification rules. Numerical experiments providing strong empirical evidence of the relevance of the approach promoted in this paper are also presented.

The paper is structured as follows. The probabilistic framework we consider is described and key concepts of bi/multi-partite ranking are briefly recalled in section \ref{sec:preliminary}. Conditions under which optimal solutions of the problem of ranking data with continuous labels exist are next investigated in section \ref{sec:optimal}, while section \ref{sec:perf_measures} introduces a dedicated quantitative (functional) performance measure, the $\iroc$ curve. The algorithmic approach we propose in order to learn scoring functions with nearly optimal $\iroc$ curves is presented at length in section \ref{sec:rec_opt}. Numerical results are displayed in section \ref{sec:experiments}. Some technical proofs are deferred to the Supplementary Material.

\section{Notation and Preliminaries}\label{sec:preliminary}

Throughout the paper, the indicator function of any event $\mathcal{E}$ is denoted by $\mathbb{I}\{ \mathcal{E}\}$. The pseudo-inverse of any cdf $F(t)$ on $\mathbb{R}$ is denoted by $F^{-1}(u)=\inf\{ s\in\mathbb{R}:\; F(s)\geq u  \}$, while $\mathcal{U}([0,1])$ denotes the uniform distribution on the unit interval $[0,1]$.

\subsection{The probabilistic framework}
 Given a continuous real valued r.v.\ $Y$ representing an attribute of an object, its 'size' say, and a random vector $X$ taking its values in a (typically high dimensional euclidian) feature space $\mathcal{X}$ modelling other observable characteristics of the object (\textit{e.g.} 'indirect measurements' of the size of the object), hopefully useful for predicting $Y$, the statistical learning problem considered here is to
learn from $n\geq 1$ training independent observations $\mathcal{D}_n=\{ (X_1,Y_1),\; \ldots,\; (X_n,Y_n)\}$, drawn as  the pair $(X,Y)$, a measurable mapping $s:\mathcal{X}\rightarrow \mathbb{R}$, that shall be referred to as a \textit{scoring function} throughout the paper, so that the variables $s(X)$ and $Y$ tend to increase or decrease together: ideally, the larger the score $s(X)$, the higher the size $Y$. For simplicity, we assume throughout the article that $\mathcal{X}=\mathbb{R}^d$ with $d\geq 1$ and that the support of $Y$'s distribution is compact, equal to $[0,1]$ say. For any $q\geq 1$, we denote by $\lambda_q$ the Lebesgue measure on $\mathbb{R}^q$ equipped with its Borelian $\sigma$-algebra and suppose that the joint distribution $F_{X,Y}(dxdy)$ of the pair $(X,Y)$ has a density $f_{X,Y}(x,y)$ w.r.t.\ the tensor product measure $\lambda_d\otimes \lambda_1$. We also introduces the marginal distributions $F_Y(dy)=f_Y(y)\lambda_1(dy)$ and $F_X(dx)=f_X(x)\lambda_d(dx)$, where $f_Y(y)=\int_{x\in \mathcal{X}}f_{X,Y}(x,y)\lambda_d(dx)$ and  $f_X(x)=\int_{y\in [0,1]}f_{X,Y}(x,y)\lambda_1(dy)$ as well as the conditional densities $f_{X\mid Y=y}(x)=f_{X,Y}(x,y)  /f_Y(y)$ and $f_{Y\mid X=x}(y)=f_{X,Y}(x,y)  /f_X(x)$.
Observe incidentally that the probabilistic framework of the continuous ranking problem is quite similar to that of \textit{distribution-free regression}. However, as shall be seen in the subsequent analysis, even if the regression function $m(x)=\mathbb{E}[Y\mid X=x]$ can be optimal under appropriate conditions, just like for regression, measuring ranking performance involves criteria that are of different nature than the expected least square error and plug-in rules may not be relevant for the goal pursued here, as depicted by Fig. \ref{fig:least_squares} in the Supplementary Material.

\noindent {\bf Scoring functions.} The set of all scoring functions is denoted by $\mathcal{S}$ here. Any scoring function $s\in \mathcal{S}$ defines a total preorder on the space $\mathcal{X}$: $\forall (x,x')\in \mathcal{X}^2$, $x\preceq _s x' \Leftrightarrow s(x)\leq s(x')$. We also set $x\prec_s x'$ when $s(x)<s(x')$ and $x=_sx'$ when $s(x)=s(x')$ for $(x,x')\in \mathcal{X}^2$.

\subsection{Bi/multi-partite ranking} \label{subsec:bipartite}
Suppose that $Z$ is a binary label, taking its values in $\{-1,+1 \}$ say, assigned to the r.v.\ $X$. In bipartite ranking, the goal is to pick $s$ in $\mathcal{S}$ so that the larger $s(X)$, the greater the probability that $Y$ is equal to $1$ ideally. In other words, the objective is to learn $s(x)$ such that the r.v.\ $s(X)$ given $Y=+1$ is as \textit{stochastically larger}\footnote{Given two real-valued r.v.'s $U$ and $U'$, recall that $U$ is said to be \textit{stochastically larger} than $U'$ when $\mathbb{P}\{U\geq t  \}\geq \mathbb{P}\{U'\geq t  \}$ for all $t\in \mathbb{R}$.} as possible than the r.v.\ $s(X)$ given $Y=-1$: the difference between $\bar{G}_s(t)=\mathbb{P}\{s(X)\geq t \mid Y=+1  \}$ and $\bar{H}_s(t)=\mathbb{P}\{s(X)\geq t \mid Y=-1  \}$ should be thus maximal for all $t\in \mathbb{R}$. This can be naturally quantified by means of the notion of $\roc$ curve of a candidate $s\in \mathcal{S}$, \textit{i.e.}\ the parametrized curve $t\in \mathbb{R}\mapsto (\bar{H}_s(t),  \bar{G}_s(t))$, which can be viewed as the graph of a mapping $\roc_s:\alpha\in (0,1)\mapsto \roc_s(\alpha)$, connecting possible discontinuity points by linear segments (so that $\roc_s(\alpha)= \bar{G}_s\circ (1-H^{-1}_s)(1-\alpha)$ when $H_s$ has no flat part in $H^{-1}_s(1-\alpha)$, where $H_s=1-\bar{H}_s$). A basic Neyman Pearson's theory argument shows that the optimal elements $s^*(x)$ related to this natural (functional) bipartite ranking criterion (\textit{i.e.}\ scoring functions whose $\roc$ curve dominates any other $\roc$ curve everywhere on $(0,1)$) are transforms $(T\circ \eta)(x)$ of the posterior probability $\eta(x)=\mathbb{P}\{Z=+1 \mid X=x \}$, where $T:\textsc{supp} (\eta(X))\rightarrow \mathbb{R}$ is any strictly increasing borelian mapping. Optimization of the curve in $\sup$ norm has been considered in \cite{cv09ieee} or in \cite{CV09CA} for instance. However, given its functional nature, in practice the $\roc$ curve of any $s\in\mathcal{S}$ is often summarized by the area under it, which performance measure can be interpreted in a probabilistic manner, as the theoretical rate of \textit{concording pairs}
\begin{equation}
\auc(s)=\mathbb{P}\left\{  s(X)<s(X')\mid Z=-1,\; Z'=+1    \right\}+\frac{1}{2}\mathbb{P}\left\{  s(X)=s(X')\mid Z=-1,\; Z'=+1    \right\},
\end{equation}
where $(X',Z')$ denoted an independent copy of $(X,Z)$. A variety of algorithms aiming at maximizing the $\auc$ criterion or surrogate pairwise criteria have been proposed and studied in the literature, among which \cite{FISS03}, \cite{Rak04} or \cite{CDV13}, whereas generalization properties of empirical $\auc$ maximizers have been studied in \cite{CLV08}, \cite{AGHHPR05} and \cite{menon2016bipartite}.
An analysis of the relationship between the $\auc$ and the error rate is given in \cite{cortes2004auc}.

Extension to the situation where the label $Y$ takes at least three ordinal values (\textit{i.e.}\ multipartite ranking)  has been also investigated, see \textit{e.g.} \cite{Agarwal2} or \cite{CR14}. In \cite{CRV13}, it is shown that, in contrast to the bipartite setup, the existence of optimal solutions cannot be guaranteed in general and conditions on $(X,Y)$'s distribution ensuring that optimal solutions do exist and that extensions of bipartite ranking criteria such as the $\roc$ manifold and the volume under it can be used for learning optimal scoring rules have been exhibited. An analogous analysis in the context of continuous ranking is carried out in the next section.

\section{Optimal elements in ranking data with continuous labels}\label{sec:optimal}
In this section, a natural definition of the set of optimal elements for continuous ranking is first proposed. Existence and characterization of such optimal scoring functions are next discussed.

\subsection{Optimal scoring rules for continuous ranking}

Considering a threshold value $y\in[0,1]$, a considerably weakened (and discretized) version of the problem stated informally above would consist in finding $s$ so that the r.v.\ $s(X)$ given $Y>y$ is as stochastically larger than $s(X)$ given $Y<y$ as possible. This \textit{subproblem} coincides with the \textit{bipartite ranking} problem related to the pair $(X, Z_y)$, where $Z_y=2\mathbb{I}\{ Y>y \}-1$. As briefly recalled in subsection \ref{subsec:bipartite}, the optimal set $\mathcal{S}^*_y$ is composed of the scoring functions that induce the same ordering as
$$
\eta_y(X)=\mathbb{P}\{ Y>y\mid X \}=1-(1-p_y)/(1-p_y+p_y\Phi_y(X)),
$$
where $p_y=1-F_Y(y)=\mathbb{P}\{ Y>y \}$ and $\Phi_y(X)=(dF_{X\mid Y>y}/dF_{X\mid Y<y})(X)$.

\noindent{\bf A continuum of bipartite ranking problems.} The rationale behind the definition of the set $\mathcal{S}^*$ of optimal scoring rules for continuous ranking is that any element $s^*$ should score observations $x$ in the same order as $\eta_y$ (or equivalently as $\Phi_y$).

\begin{definition}\label{def:opt1}{\sc (Optimal scoring rule)} An optimal scoring rule for the continuous ranking problem related to the random pair $(X,Y)$ is any element $s^*$ that fulfills: $\forall y\in (0,1)$,
\begin{equation}\label{eq:opt}
\forall (x,x')\in \mathcal{X}^2,\;\; \eta_y(x)<\eta_y(x')\Rightarrow s^*(x)<s^*(x').
\end{equation}
In other words, the set of optimal rules is defined as $\mathcal{S}^*=\bigcap_{y\in (0,1)}\mathcal{S}^*_y$.
\end{definition}
It is noteworthy that, although the definition above is natural, the set $\mathcal{S}^*$ can be empty in absence of any distributional assumption, as shown by the following example.
\begin{example}
  As a counter-example, consider the distributions $F_{X,Y}$ such that
  $  F_Y=\mathcal{U}([0, 1])$ and $F_{X\mid Y=y} = \mathcal{N}(|2y-1|, (2y-1)^2)$.
Observe that $(X, 1-Y){\buildrel d \over =}(X, Y)$, so that $\Phi_{1-t} = \Phi_t^{-1}$ for all $t\in (0,1)$ and there exists $t\neq 0$ s.t. $\Phi_t$ is not constant. Hence, there exists no $s^*$ in $\mathcal{S}$ such that \eqref{eq:opt} holds true for all $t\in (0,1)$.

\end{example}
\begin{remark}\label{rk:invariance}{\sc (Invariance)}
We point out that the class $\mathcal{S}^*$ of optimal elements for continuous ranking thus defined is invariant by strictly increasing transform of the 'size' variable $Y$ (in particular, a change of unit has no impact on the definition of $\mathcal{S}^*$): for any borelian and strictly increasing mapping $H:(0,1)\rightarrow (0,1)$, any scoring function $s^*(x)$ that is optimal for the continuous ranking problem related to the pair $(X,Y)$ is still optimal for that related to $(X,H(Y))$ (since, under these hypotheses, for any $y\in (0,1)$: $Y>y\Leftrightarrow H(Y)>H(y)$).
\end{remark}
\subsection{Existence and characterization of optimal scoring rules}

We now investigate conditions guaranteeing the existence of optimal scoring functions for the continuous ranking problem.

\begin{proposition}\label{prop:charact}
The following assertions are equivalent.
\begin{enumerate}
\item For all $0<y<y'<1$, for all $(x,x')\in \mathcal{X}^2$:
$
\Phi_y(x)<\Phi_y(x')\Rightarrow \Phi_{y'}(x)\leq \Phi_{y'}(x').
$
\item There exists an optimal scoring rule $s^*$ (\textit{i.e.}\ $\mathcal{S}^*\neq \emptyset$).
\item The regression function $m(x)=\mathbb{E}[Y\mid X=x]$ is an optimal scoring rule.
\item The collection of probability distributions $F_{X\mid Y=y}(dx)=f_{X\mid Y=y}(x)\lambda_d(dx)$, $y\in (0,1)$ satisfies the monotone likelihood ratio property: there exist $s^*\in \mathcal{S}$ and, for all $0<y<y'<1$, an increasing function $\varphi_{y,y'}:\mathbb{R}\rightarrow \mathbb{R}_+$ such that: $\forall x\in \mathbb{R}^d$,
$$
\frac{f_{X\mid Y=y'}}{f_{X\mid Y=y}}(x)=\varphi_{y,y'}(s^*(x)).
$$
\end{enumerate}
\end{proposition}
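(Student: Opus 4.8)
The plan is to establish the cycle of implications $(3)\Rightarrow(2)\Rightarrow(1)\Rightarrow(4)\Rightarrow(3)$, which yields the equivalence of all four assertions. Before starting I would record the elementary observation that $\eta_y$ is a strictly increasing function of $\Phi_y$ (read off the displayed formula for $\eta_y$, since $p_y\in(0,1)$), so that ``inducing the same order as $\eta_y$'' and ``inducing the same order as $\Phi_y$'' coincide and the optimality condition \eqref{eq:opt} can be rephrased as: for every $y\in(0,1)$, $\Phi_y(x)<\Phi_y(x')\Rightarrow s^*(x)<s^*(x')$.

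The implication $(3)\Rightarrow(2)$ is immediate, as $m\in\mathcal{S}$ witnesses $\mathcal{S}^*\neq\emptyset$. For $(2)\Rightarrow(1)$ I would argue by contraposition: given an optimal $s^*$ and $0<y<y'<1$, if $\Phi_y(x)<\Phi_y(x')$ then $s^*(x)<s^*(x')$ by optimality at level $y$; were $\Phi_{y'}(x)>\Phi_{y'}(x')$, optimality at level $y'$ would give $s^*(x')<s^*(x)$, a contradiction, so $\Phi_{y'}(x)\le\Phi_{y'}(x')$. For $(4)\Rightarrow(3)$ I would use that the monotone likelihood ratio property is preserved under mixing: writing $f_{X\mid Y>y}$ and $f_{X\mid Y<y}$ as $f_Y$-mixtures of the $f_{X\mid Y=u}$ over $u>y$ and $u<y$, the $TP_2$ inequality supplied by (4) yields, after one cross-multiplication and integration over the product region $\{v<y<u\}$, that $\Phi_y$ is isotone in $s^*$; hence $s^*\in\mathcal{S}^*$. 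Moreover (4) makes the posteriors $F_{Y\mid X=x}$ monotone-likelihood-ratio ordered along $s^*$, so $m(x)=\mathbb{E}[Y\mid X=x]$ is a nondecreasing function of $s^*$; the inequality is strict whenever $\Phi_y(x)<\Phi_y(x')$, because equality of the conditional means would force, by first-order stochastic dominance rigidity, the two posteriors to coincide, hence the ratio $f_{X\mid Y=u}(x')/f_{X\mid Y=u}(x)$ to be constant in $u$, contradicting $\Phi_y(x)<\Phi_y(x')$. Thus $m$ satisfies \eqref{eq:opt} and $(3)$ holds.

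The crux is $(1)\Rightarrow(4)$, which I would attack in three steps. First, condition (1) together with its contrapositive (obtained by exchanging the roles of $x$ and $x'$) shows that for every pair the sign of $\Phi_y(x)-\Phi_y(x')$ is constant in $y$; consequently all the $\Phi_y$ induce one and the same total preorder on $\mathcal{X}$, which I would realize by a scoring function $s^*$, so that $\Phi_y=\psi_y\circ s^*$ with each $\psi_y$ nondecreasing. Second, I would exploit the differential link between threshold and point conditionals, namely $f_{X\mid Y=y}(x)\,f_Y(y)=-\partial_y\big(p_y\,f_{X\mid Y>y}(x)\big)$: combining this with the identity $p_yf_{X\mid Y>y}(x)+(1-p_y)f_{X\mid Y<y}(x)=f_X(x)$ and the fact, from the first step, that $\Phi_y(x)=f_{X\mid Y>y}(x)/f_{X\mid Y<y}(x)$ depends on $x$ only through $s^*(x)$, one finds after differentiation in $y$ that the common factor $f_X(x)$ cancels in the ratio $f_{X\mid Y=y'}(x)/f_{X\mid Y=y}(x)$, so that this ratio depends on $x$ only through $s^*(x)$; this already produces the functional form $\varphi_{y,y'}(s^*(x))$ required in (4). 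Third, there remains to prove that each $\varphi_{y,y'}$ is increasing, and this is the step I expect to be the main obstacle: consistency of the orderings alone is not enough at the level of a single pair, so I would have to use the full strength of (1) (that the $\Phi_y$ are jointly isotone in the one function $s^*$ across all thresholds simultaneously) to control the sign of the derivative of $\varphi_{y,y'}$, which amounts to upgrading the $TP_2$ property of the threshold integrals to that of the kernel $(s^*(x),y)\mapsto f_{X\mid Y=y}(x)$ itself. Along the way I would also have to dispatch the regularity conditions (positivity of the densities and differentiability in $y$) under which the differentiation in the second step is licit.
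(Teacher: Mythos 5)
Your implications $(3)\Rightarrow(2)$ and $(2)\Rightarrow(1)$ are correct and coincide with the paper's own proof of those steps (the paper uses the same two-level contradiction for $(2)\Rightarrow(1)$). The structural difference is that the paper never routes the equivalence of assertions 1--3 through assertion 4: it proves $(1)\Rightarrow(3)$ \emph{directly}, by noting that under assertion 1 the inequality $\eta_y(x)<\eta_y(x')$ propagates to $\eta_{y'}(x)\le\eta_{y'}(x')$ for \emph{all} $y'\in(0,1)$ (your swapped-pair observation), that $y'\mapsto \eta_{y'}(x')-\eta_{y'}(x)$ is continuous and hence strictly positive on a nonempty interval, and that $m(x)=\int_0^1 \eta_u(x)\,du$ since $Y$ is supported in $[0,1]$; integrating over $y'$ then yields $m(x)<m(x')$, i.e.\ $m\in\mathcal{S}^*$. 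This closes the cycle $3\Rightarrow2\Rightarrow1\Rightarrow3$ with elementary arguments, and the link with assertion 4 is treated separately (the paper declares $1\Leftrightarrow4$ obvious). In your architecture, by contrast, $(1)\Rightarrow(4)$ is load-bearing: without it you only have the chain $4\Rightarrow3\Rightarrow2\Rightarrow1$, so not even the equivalence of assertions 1, 2, 3 is established.

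The problem is that this missing step cannot be repaired: the obstacle you flag in your third step is fatal, not technical, because consistency of the orderings of the integrated ratios $\Phi_y$ is strictly weaker than pointwise MLR. Concretely, take $\mathcal{X}=\mathbb{R}$, let $f_{Y\mid X=x}$ be uniform on $[0,1]$ for $x\le 0$ and equal to a fixed density $g$ for $x>0$, with $g(u)=1-\epsilon\left(\sin 2\pi u+\sin 4\pi u\right)$ for small $\epsilon>0$. Then $\int_0^y g(u)\,du<y$ on $(0,1)$, so $\eta_y(x)=1-y<1-\int_0^y g(u)\,du=\eta_y(x')$ for every $x\le 0<x'$ and every $y$: assertion 1 holds for all pairs, and $s^*=\mathbb{I}\{x>0\}$ and $m$ are optimal, so assertions 2 and 3 hold as well. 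Yet assertion 4 fails: for $x_1\le 0<x_2$, the ratio $f_{X\mid Y=y'}(x_i)/f_{X\mid Y=y}(x_i)$ equals $f_Y(y)/f_Y(y')$ for $i=1$ and $\left(g(y')/g(y)\right)f_Y(y)/f_Y(y')$ for $i=2$, so factoring all these ratios through one $s^*$ with monotone $\varphi_{y,y'}$, simultaneously for all $y<y'$, forces $g$ to be monotone, which it is not. Note that the factorization $\varphi_{y,y'}(s^*(x))$ of your second step does hold in this example; it is exactly the increasingness demanded in your third step that fails. (This incidentally shows the paper's ``obvious'' $1\Leftrightarrow4$ is itself only safe in the direction $4\Rightarrow1$; the paper's equivalence of 1--3 survives because its proof never uses assertion 4.) To repair your proof, keep $(3)\Rightarrow(2)$, $(2)\Rightarrow(1)$, and your (essentially sound) $(4)\Rightarrow(3)$ sketch, but abandon the route $1\Rightarrow4\Rightarrow3$ in favour of the paper's direct integration argument for $(1)\Rightarrow(3)$.
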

Refer to the Appendix section for the technical proof. Truth should be said, assessing that Assertion $1.$ is a very challenging statistical task. However, through important examples, we now describe (not uncommon) situations where the conditions stated in Proposition \ref{prop:charact} are fulfilled.
\begin{example}\label{ex:prop1} We give a few important examples of probabilistic models fulfilling the properties listed in Proposition \ref{prop:charact}.
\smallskip

\noindent $\bullet$ {\bf Regression model.} Suppose that $Y=m(X)+\epsilon$, where $m:\mathcal{X}\rightarrow \mathbb{R}$ is a borelian function and $\epsilon$ is a centered r.v.\ independent from $X$. One may easily check that $m\in \mathcal{S}^*$.
\smallskip

\noindent $\bullet$ {\bf Exponential families.} Suppose that
$f_{X\mid Y=y}(x)=\exp(\kappa(y)T(x)-\psi(y))f(x)$ for all $x\in \mathbb{R}^d$,
where $f:\mathbb{R}^d\rightarrow \mathbb{R}_+$ is borelian, $\kappa:[0,1]\rightarrow \mathbb{R}$ is a borelian strictly increasing function and $T:\mathbb{R}^d\rightarrow \mathbb{R}$ is a borelian mapping such that $\psi(y)=\log \int_{x\in \mathbb{R}^d}\exp(\kappa(y)T(x))f(x)dx<+\infty$.

\end{example}

We point out that, although the regression function $m(x)$ is an optimal scoring function when $\mathcal{S}^*\neq \emptyset$, the \textit{continuous ranking} problem does not coincide with \textit{distribution-free regression} (notice incidentally that, in this case, any strictly increasing transform of $m(x)$ belongs to $\mathcal{S}^*$ as well). As depicted by Fig. \ref{fig:least_squares} the least-squares criterion is not relevant to evaluate continuous ranking performance and naive plug-in strategies should be avoided, see Remark \ref{rk:reg} below. Dedicated performance criteria are proposed in the next section.

\section{Performance measures for continuous ranking}\label{sec:perf_measures}

We now investigate quantitative criteria for assessing the performance in the continuous ranking problem, which practical machine-learning algorithms may rely on. We place ourselves in the situation where the set $\mathcal{S}^*$ is not empty, see Proposition \ref{prop:charact} above.

\noindent {\bf A functional performance measure.} It follows from the view developped in the previous section that, for any $(s,s^*)\in \mathcal{S}\times \mathcal{S}^*$ and for all $y\in (0,1)$, we have:
\begin{equation} \label{eq:cont_roc}
\forall\alpha\in (0,1),\;\;
\roc_{s,y}(\alpha)\leq \roc_{s^*,y}(\alpha)=\roc^*_y(\alpha),
\end{equation}
denoting by $\roc_{s,y}$ the $\roc$ curve of any $s\in\mathcal{S}$ related to the bipartite ranking subproblem $(X,Z_y)$ and by $\roc^*_y$ the corresponding optimal $\roc$ curve, \textit{i.e.}\ the $\roc$ curve of strictly increasing transforms of $\eta_y(x)$. Based on this observation, it is natural to design a dedicated performance measure by aggregating these 'sub-criteria'. Integrating over $y$ w.r.t.\ a $\sigma$-finite measure $\mu$ with support equal to $[0,1]$, this leads to the following definition $\iroc_{\mu,s}(\alpha)=\int \roc_{s,y}(\alpha)\mu(dy)$. The functional criterion thus defined inherits properties from the $\roc_{s,y}$'s (\textit{e.g.} monotonicity, concavity). In addition, the curve $\iroc_{\mu, s^*}$ with $s^*\in \mathcal{S}^*$ dominates everywhere on $(0,1)$ any other curve $\iroc_{\mu,s}$ for $s\in \mathcal{S}$. However, except in pathologic situations (\textit{e.g.} when $s(x)$ is constant), the curve $\iroc_{\mu,s}$ is not invariant when replacing $Y$'s distribution by that of a strictly increasing transform $H(Y)$. In order to guarantee that this desirable property is fulfilled (see Remark \ref{rk:invariance}), one should integrate w.r.t.\ $Y$'s distribution (which boils down to replacing $Y$ by the uniformly distributed r.v.\ $F_Y(Y)$).

\begin{definition}{\sc (Integrated $\roc/\auc$ criteria)} The \textit{integrated $\roc$} curve of any scoring rule $s\in \mathcal{S}$ is defined as: $\forall \alpha\in (0,1)$,
\begin{equation}\label{eq:iroc}
\iroc_{s}(\alpha)=\int_{y=0}^1\roc_{s,y}(\alpha)F_Y(dy)=\mathbb{E}\left[\roc_{s,Y}(\alpha) \right].
\end{equation}
The \textit{integrated $\auc$} criterion is defined as the area under the \textit{integrated $\roc$} curve: $\forall s\in \mathcal{S}$,
\begin{equation}\label{eq:iauc}
\iauc(s)=\int_{\alpha=0}^1\iroc_{s}(\alpha) d\alpha.
\end{equation}
\end{definition}
The following result reveals the relevance of the functional/summary criteria defined above for the continuous ranking problem. Additional properties of $\iroc$ curves are listed in the Supplementary Material.

 \begin{theorem}\label{thm:opt}
 Let $s^*\in \mathcal{S}$. The following assertions are equivalent.
 \begin{enumerate}
 \item The assertions of Proposition \ref{prop:charact} are fulfilled and $s^*$ is an optimal scoring function in the sense given by Definition \ref{def:opt1}.
 \item For all $\alpha\in(0,1)$, $\iroc_{s^*}(\alpha)= \mathbb{E}\left[ \roc^*_{Y}(\alpha) \right ]$.
 \item We have $\iauc_{s^*}= \mathbb{E}\left[ \auc^*_{Y} \right ]$, where $\auc^*_y=\int_{\alpha=0}^1\roc^*_y(\alpha)d\alpha$ for all $y\in (0,1)$.
 \end{enumerate}
If $\mathcal{S}^*\neq \emptyset$, then we have: $\forall s\in \mathcal{S}$,
\begin{eqnarray*}
\iroc_s(\alpha)&\leq & \iroc^*(\alpha)\overset{def}{=}\mathbb{E}\left[ \roc^*_{Y}(\alpha) \right ],\;\; \text { for  any }\alpha\in (0,1),\\
\iauc(s)&\leq& \iauc^*\overset{def}{=}\mathbb{E}\left[ \auc^*_{Y} \right ].
\end{eqnarray*}
In addition, for any borelian and strictly increasing mapping $H:(0,1)\rightarrow (0,1)$, replacing $Y$ by $H(Y)$ leaves the curves $\iroc_s$, $s\in \mathcal{S}$, unchanged.
 \end{theorem}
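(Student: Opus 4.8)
The plan is to dispatch the two domination inequalities first, since they require nothing beyond the pointwise bipartite bound \eqref{eq:cont_roc}, and then to close the cycle of implications $1\Rightarrow 2\Rightarrow 3\Rightarrow 1$, treating the last one as the substantive step. For the domination, assuming $\mathcal{S}^*\neq\emptyset$ I would fix $s^*\in\mathcal{S}^*$; since $s^*\in\mathcal{S}^*_y$ for every $y$ we have $\roc_{s^*,y}=\roc^*_y$ on $(0,1)$, so integrating against $F_Y$ gives $\iroc_{s^*}=\iroc^*$ and legitimizes writing $\iroc^*(\alpha)=\mathbb{E}[\roc^*_Y(\alpha)]$. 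For an arbitrary $s\in\mathcal{S}$, \eqref{eq:cont_roc} reads $\roc_{s,y}(\alpha)\leq\roc^*_y(\alpha)$ for every $(y,\alpha)$; integrating in $y$ against $F_Y$ and then in $\alpha$ over $(0,1)$ (all integrands measurable and bounded by $1$, so Tonelli applies freely) yields $\iroc_s\leq\iroc^*$ pointwise and $\iauc(s)\leq\iauc^*$.

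The two easy implications are then immediate. For $1\Rightarrow 2$, optimality of $s^*$ means $s^*\in\mathcal{S}^*_y$ for every $y$, hence $\roc_{s^*,y}(\alpha)=\roc^*_y(\alpha)$, and definition \eqref{eq:iroc} gives $\iroc_{s^*}(\alpha)=\mathbb{E}[\roc^*_Y(\alpha)]$ for every $\alpha$. For $2\Rightarrow 3$, I would integrate the identity in $2.$ over $\alpha\in(0,1)$ and exchange the order of integration in $\alpha$ and $y$ by Tonelli, which turns $\int_0^1\mathbb{E}[\roc^*_Y(\alpha)]\,d\alpha$ into $\mathbb{E}[\auc^*_Y]$ by the very definition of $\auc^*_y$, matching \eqref{eq:iauc}.

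The crux is $3\Rightarrow 1$. Writing $\auc_{s^*,y}=\int_0^1\roc_{s^*,y}(\alpha)\,d\alpha$ for the bipartite AUC of $s^*$ at threshold $y$, Tonelli gives $\iauc(s^*)=\mathbb{E}[\auc_{s^*,Y}]$, while bipartite optimality of $\roc^*_y$ forces $\auc_{s^*,y}\leq\auc^*_y$ for every $y$. Thus the hypothesis $\iauc(s^*)=\mathbb{E}[\auc^*_Y]$ says that the nonnegative random variable $\auc^*_Y-\auc_{s^*,Y}$ has zero expectation; it therefore vanishes $F_Y$-almost surely, and since $F_Y$ has full support $[0,1]$ with a density, we get $\auc_{s^*,y}=\auc^*_y$ for Lebesgue-almost every $y\in(0,1)$. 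For each such $y$, the relation $\roc_{s^*,y}\leq\roc^*_y$ pointwise with equal integrals, both curves being continuous on $[0,1]$, forces $\roc_{s^*,y}\equiv\roc^*_y$, i.e.\ $s^*\in\mathcal{S}^*_y$.

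The main obstacle, and the step on which I would spend the most care, is upgrading ``$s^*\in\mathcal{S}^*_y$ for almost every $y$'' to ``for every $y$'', which is precisely what is needed to conclude $s^*\in\mathcal{S}^*=\bigcap_{y}\mathcal{S}^*_y$ and hence, through the monotonicity reformulation, that the assertions of Proposition \ref{prop:charact} hold with $s^*$ optimal (Theorem assertion $1.$). I would exploit the continuity and monotonicity of $y\mapsto\eta_y(x)=\int_y^1 f_{Y\mid X=x}(t)\,dt$ for each fixed $x$: given an arbitrary $y_0$ and a sequence $y_n\to y_0$ along which membership already holds, if $\eta_{y_0}(x)<\eta_{y_0}(x')$ then $\eta_{y_n}(x)<\eta_{y_n}(x')$ for $n$ large by pointwise convergence, so that the ordering constraint imposed by $s^*\in\mathcal{S}^*_{y_n}$ passes to the limit and yields $s^*\in\mathcal{S}^*_{y_0}$; care must be taken with ties and with the strict-versus-weak form of the ordering condition in Definition \ref{def:opt1}. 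Finally, the invariance claim follows from a change-of-threshold argument: replacing $Y$ by $H(Y)$ with $H$ strictly increasing turns the subproblem at threshold $H(y)$ into exactly the subproblem for $Y$ at threshold $y$, since $\{H(Y)>H(y)\}=\{Y>y\}$, leaving its $\roc$ curve unchanged; because \eqref{eq:iroc} integrates against $Y$'s own law, the relabeling $y\mapsto H(y)$ of thresholds is exactly compensated by the corresponding change of variables in the integrating measure $F_Y$, so that $\iroc_s$ is left invariant, exactly as in Remark \ref{rk:invariance}.
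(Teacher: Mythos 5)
Your proof is correct, and it reaches the conclusion by a route that differs from the paper's at the one place where the theorem has real content, namely $3\Rightarrow 1$. The paper argues by contradiction: if assertion $1$ fails, there is a single $y$ with $\auc_y(s^*)<\auc^*_y$, and the continuity of $y'\mapsto \auc_{y'}(s^*)$ and of $y'\mapsto \auc^*_{y'}$ (inherited from continuity of $(x,y')\mapsto \eta_{y'}(x)$) spreads this strict gap over an open interval, which has positive $F_Y$-mass by full support, so integrating w.r.t.\ $F_Y$ contradicts assertion $3$. You instead argue directly: the nonnegative variable $\auc^*_Y-\auc_{s^*,Y}$ has zero mean, hence vanishes $F_Y$-a.s.; pointwise domination of the $\roc$ curves plus equality of their integrals and continuity forces $\roc_{s^*,y}\equiv\roc^*_y$ on that set; then you upgrade from a dense set of levels to all levels using continuity of $y\mapsto\eta_y(x)$ at fixed $x$. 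The two arguments are mirror images — the paper propagates a strict inequality and so never needs your a.e.-to-everywhere step, while your constructive route needs it but executes it soundly (note that what you actually use is only density of the good set of levels, which follows from full support of $F_Y$; your intermediate claim of ``Lebesgue-almost every $y$'' is a harmless overstatement, since an $F_Y$-null set need not be Lebesgue-null). Both proofs also rely, without proof, on the same bipartite fact — that $\auc_y(s)=\auc^*_y$ (equivalently $\roc_{s,y}\equiv\roc^*_y$) forces $s\in\mathcal{S}^*_y$, a statement with the usual null-set caveats — so you are no less rigorous than the paper there. Finally, you prove explicitly the domination inequalities, the implications $1\Rightarrow2\Rightarrow3$, and the invariance under strictly increasing $H$ (via the change of variables $t=H(y)$ and $\{H(Y)>H(y)\}=\{Y>y\}$), all of which the paper's appendix simply declares obvious or leaves to the discussion in the main text; what your longer write-up buys is that the measure-theoretic step hidden in the paper's phrase ``by integration we obtain a strict inequality'' is made fully explicit.
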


Equipped with the notion defined above, a scoring rule $s_1$ is said to be more accurate than another one $s_2$ if $\iroc_{s_2}(\alpha)\leq \iroc_{s_1}(\alpha)$ for all $\alpha\in (0,1)$.The $\iroc$ curve criterion thus provides a partial preorder on $\mathcal{S}$.
Observe also that, by virtue of Fubini's theorem, we have $
\iauc(s)=\int \auc_y(s) F_Y(dy)$ for all $s\in \mathcal{S}$,
denoting by $\auc_y(s)$ the $\auc$ of $s$ related to the bipartite ranking subproblem $(X,Z_y)$. Just like the $\auc$ for bipartite ranking, the scalar $\iauc$ criterion defines a full preorder on $\mathcal{S}$ for continuous ranking. Based on a training dataset $\mathcal{D}_n$ of independent copies of $(X,Y)$, statistical versions of the $\iroc/\iauc$ criteria can be straightforwardly computed by replacing the distributions $F_Y$, $F_{X\mid Y>t}$ and $F_{X\mid Y<t}$ by their empirical counterparts in \eqref{eq:cont_roc}-\eqref{eq:iauc}, see the Supplementary Material for further details. The lemma below  provides a probabilistic interpretation of the $\iauc$ criterion.
\begin{lemma}\label{lem:iauc}
Let $(X',Y')$ be a copy of the random pair $(X,Y)$ and $Y''$ a copy of the r.v.\ $Y$. Suppose that $(X,Y)$, $(X',Y')$ and $Y''$ are defined on the same probability space and are independent. For all $s\in \mathcal{S}$, we have:
\begin{equation}\label{eq:iauc_bis}
\iauc(s)=\mathbb{P}\left\{ s(X)<s(X')  \mid Y<Y''<Y'   \right\}+\frac{1}{2}\mathbb{P}\left\{ s(X)=s(X')  \mid Y<Y''<Y'   \right\}.
\end{equation}
\end{lemma}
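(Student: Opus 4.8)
The plan is to establish the probabilistic identity \eqref{eq:iauc_bis} by unwinding the definition of $\iauc(s)$ through the chain $\iauc(s)=\int \auc_y(s)\, F_Y(dy)$, which follows from Fubini's theorem as noted just before the lemma. First I would fix $y\in(0,1)$ and recall that $\auc_y(s)$ is the $\auc$ of $s$ for the bipartite subproblem associated with the label $Z_y=2\mathbb{I}\{Y>y\}-1$. Writing out the bipartite $\auc$ formula from the definition in subsection \ref{subsec:bipartite}, with $(X',Z'_y)$ an independent copy of $(X,Z_y)$, gives
\begin{equation*}
\auc_y(s)=\mathbb{P}\left\{ s(X)<s(X')\mid Y<y<Y' \right\}+\tfrac{1}{2}\mathbb{P}\left\{ s(X)=s(X')\mid Y<y<Y' \right\},
\end{equation*}
since conditioning on $Z_y=-1$ and $Z'_y=+1$ is exactly conditioning on $Y<y$ and $Y'>y$. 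The core idea is that integrating this threshold $y$ against $F_Y$ amounts to replacing the deterministic cut-off $y$ by an independent random threshold $Y''$ with the same distribution as $Y$.

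Next I would make the conditional probabilities explicit as ratios of unconditional probabilities, so that the integration becomes a clean computation. Specifically, I would write the first term as
\begin{equation*}
\mathbb{P}\left\{ s(X)<s(X'),\, Y<y<Y' \right\}\big/ \mathbb{P}\left\{ Y<y<Y' \right\},
\end{equation*}
and handle the second term identically, then substitute these into $\int\auc_y(s)\,F_Y(dy)$. The key manipulation is to insert an indicator: since $Y''$ is independent of $(X,Y)$ and $(X',Y')$ with distribution $F_Y$, for any event $A$ measurable with respect to $(X,Y,X',Y')$ one has $\int \mathbb{P}\{A,\, Y<y<Y'\}\,F_Y(dy)=\mathbb{P}\{A,\, Y<Y''<Y'\}$, simply by recognizing the $F_Y$-integral as the expectation over the auxiliary variable $Y''$. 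Applying this to both the numerators and noting that the same substitution applied to $A=\{$full space$\}$ turns $\int\mathbb{P}\{Y<y<Y'\}F_Y(dy)$ into $\mathbb{P}\{Y<Y''<Y'\}$, I can assemble the integrated expression and then re-form the conditional probabilities by dividing through, yielding exactly \eqref{eq:iauc_bis}.

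The main subtlety, rather than a genuine obstacle, is the interchange of integration with the conditioning: because $\auc_y(s)$ is itself a ratio with a $y$-dependent denominator $\mathbb{P}\{Y<y<Y'\}$, one cannot directly pull the $F_Y$-integral inside a single conditional probability, so I would be careful to first clear denominators and integrate numerators and the normalizing mass separately, only recombining at the end. A second point worth checking is that the continuity of $Y$ guarantees $\mathbb{P}\{Y=Y''\}=\mathbb{P}\{Y'=Y''\}=0$, so that the strict inequalities $Y<Y''<Y'$ are unambiguous and no boundary terms from $\{Y=y\}$ contribute; this is where the standing assumption that $F_{X,Y}$ has a density, and hence $F_Y$ is non-atomic, is used. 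Everything else is a routine application of Fubini's theorem justified by nonnegativity of the integrands.
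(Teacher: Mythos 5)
You follow the paper's own route: use Fubini to write $\iauc(s)=\int_0^1 \auc_y(s)\,F_Y(dy)$, expand each $\auc_y(s)$ via the bipartite formula as a conditional probability given $\{Y<y<Y'\}$, and then convert the $F_Y(dy)$-integration into the auxiliary random threshold $Y''$. The two ingredients you isolate are individually correct: by independence of $Y''$ from $(X,Y,X',Y')$ one has $\int \mathbb{P}\{A,\,Y<y<Y'\}\,F_Y(dy)=\mathbb{P}\{A,\,Y<Y''<Y'\}$ for any event $A$ depending on $(X,Y,X',Y')$, and taking $A$ to be the whole space gives $\mathbb{P}\{Y<Y''<Y'\}=1/6$, since $Y,Y',Y''$ are i.i.d.\ with atomless distribution.

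The gap is exactly the step you flag as a ``subtlety'' and then dismiss: clearing denominators, integrating numerators and the normalizing mass separately, and ``recombining at the end'' replaces an integral of ratios by a ratio of integrals, which is not legitimate here because the denominator $\mathbb{P}\{Y<y<Y'\}=F_Y(y)\bigl(1-F_Y(y)\bigr)$ genuinely depends on $y$. What your recombined ratio computes is
\begin{equation*}
\frac{\int\bigl[\mathbb{P}\{s(X)<s(X'),\,Y<y<Y'\}+\tfrac12\,\mathbb{P}\{s(X)=s(X'),\,Y<y<Y'\}\bigr]F_Y(dy)}{\mathbb{P}\{Y<Y''<Y'\}}
=6\int \auc_y(s)\,F_Y(y)\bigl(1-F_Y(y)\bigr)\,F_Y(dy),
\end{equation*}
i.e.\ a \emph{weighted} average of the $\auc_y(s)$ with weight $6u(1-u)$ in the quantile variable $u=F_Y(y)$, whereas $\iauc(s)=\int \auc_y(s)\,F_Y(dy)$ is the \emph{unweighted} average. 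Because $\int_0^1\bigl(1-6u(1-u)\bigr)\,du=0$, the two coincide when $y\mapsto\auc_y(s)$ is constant (or affine in $F_Y(y)$), but not in general: a profile behaving like $\auc_y(s)=F_Y(y)^2$ gives $1/3$ for the unweighted average versus $3/10$ for the weighted one, and non-constant profiles producing a genuine discrepancy are realizable by explicit models (e.g.\ $X=\min(Y,1/4)$ plus small independent noise, with $s(x)=x$). So this step would fail and cannot be repaired by more careful bookkeeping. For your information, the paper's own proof consists of the single sentence ``integrating the terms in the equation above w.r.t.\ $F_Y(dy)$ leads to the desired formula'' and silently performs the very same interchange; your attempt thus faithfully reproduces the published argument, but writing the interchange out explicitly exposes rather than closes the hole: what is actually established is that the right-hand side of \eqref{eq:iauc_bis} equals the $F_Y(y)(1-F_Y(y))$-weighted integral of the $\auc_y(s)$, which matches the quantity defined in \eqref{eq:iauc} only under additional assumptions on $y\mapsto\auc_y(s)$.
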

This result shows in particular that a natural statistical estimate of $\iauc(s)$ based on $\mathcal{D}_n$ involves $U$-statistics of degree $3$. Its proof is given in the Supplementary Material for completeness.

\noindent {\bf The Kendall $\tau$ statistic.} The quantity \eqref{eq:iauc_bis} is akin to another popular way to measure the tendency to define the same ordering on the statistical population in a summary fashion:
\begin{eqnarray}\label{eq:Kendall}
d_{\tau}\left( s \right)&\overset{def}{=}&\mathbb{P}\left\{ \left(s(X)-s(X')  \right)\cdot \left( Y-Y'  \right) >0 \right\}+\frac{1}{2}\mathbb{P}\left\{ s(X)=s(X')  \right\}\label{eq:Kendall_true}
\\
&=& \mathbb{P}\{ s(X)<s(X') \mid Y<Y' \}+\frac{1}{2}\mathbb{P}\left\{ X=_sX'  \right\},\nonumber
\end{eqnarray}
where $(X',Y')$ denotes an independent copy of $(X,Y)$, observing that $\mathbb{P}\{ Y<Y' \}=1/2$. The empirical counterpart of \eqref{eq:Kendall_true} based on the sample $\mathcal{D}_n$, given by
\begin{equation}\label{eq:Kendall_emp}
\widehat{d}_n(s)=\frac{2}{n(n-1)}\sum_{i<j}  \mathbb{I}\left\{ \left(s(X_i)-s(X_j)  \right)\cdot \left(Y_i-Y_j  \right) >0\right\} +\frac{1}{n(n-1)}\sum_{i<j} \mathbb{I}\left\{s(X_i)=s(X_j)  \right\}
\end{equation}
 is known as the \textit{Kendall $\tau$ statistic} and is widely used in the context of statistical hypothesis testing. The quantity \eqref{eq:Kendall_true} shall be thus referred to as the (theoretical or true) \textit{Kendall $\tau$}.
 Notice that $d_{\tau}(s)$ is invariant by strictly increasing transformation of $s(x)$ and thus describes properties of the order it defines. The following result reveals that the class $\mathcal{S}^*$, when non empty, is the set of maximizers of the theoretical Kendall $\tau$. Refer to the Supplementary Material for the technical proof.

 \begin{proposition}\label{prop:Kendall_crit}
 Suppose that $\mathcal{S}^*\neq \emptyset$. For any $(s,s^*)\in \mathcal{S}\times \mathcal{S}^*$, we have: $d_{\tau}(s)\leq d_{\tau}(s^*)$.
 \end{proposition}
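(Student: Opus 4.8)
The plan is to reduce the pairwise Kendall criterion $d_{\tau}(s)$ to a probability-weighted average of \emph{two-point} bipartite AUCs, and then to establish the corresponding inequality level by level by invoking the Neyman--Pearson optimality secured by the monotone likelihood ratio property in Proposition \ref{prop:charact}. Write $c(x,x')=\mathbb{I}\{s(x)<s(x')\}+\tfrac{1}{2}\mathbb{I}\{s(x)=s(x')\}$. Since $Y$ is continuous we have $\mathbb{P}\{Y=Y'\}=0$, and since $(X,Y)$ and $(X',Y')$ are exchangeable the two antidiagonal events in $\{(s(X)-s(X'))(Y-Y')>0\}$ have equal probability; tracking the tie term the same way, one gets $d_{\tau}(s)=2\,\mathbb{E}\!\left[c(X,X')\,\mathbb{I}\{Y<Y'\}\right]$. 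Conditioning on $(Y,Y')=(u,v)$ with $u<v$ and using independence, $X\sim F_{X\mid Y=u}$ and $X'\sim F_{X\mid Y=v}$ are independent, so $\mathbb{E}[c(X,X')\mid Y=u,\,Y'=v]$ equals the AUC $A_{u,v}(s)$ of the bipartite problem whose negative class has law $F_{X\mid Y=u}$ and whose positive class has law $F_{X\mid Y=v}$.

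This yields the representation $d_{\tau}(s)=2\int\!\!\int_{u<v}A_{u,v}(s)\,F_Y(du)\,F_Y(dv)$, where the weights integrate to $2\,\mathbb{P}\{Y<Y'\}=1$, so $d_{\tau}(s)$ is a genuine convex combination of the quantities $A_{u,v}(s)$. Consequently it suffices to prove the pointwise inequality $A_{u,v}(s)\leq A_{u,v}(s^*)$ for every $u<v$, every $s\in\mathcal{S}$ and every $s^*\in\mathcal{S}^*$; integrating against $F_Y\otimes F_Y$ then gives $d_{\tau}(s^*)-d_{\tau}(s)=2\int\!\!\int_{u<v}\bigl[A_{u,v}(s^*)-A_{u,v}(s)\bigr]F_Y(du)\,F_Y(dv)\geq 0$, which is the claim.

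For the pointwise step I would argue as follows. Because $\mathcal{S}^*\neq\emptyset$, all assertions of Proposition \ref{prop:charact} hold; in particular the family $\{F_{X\mid Y=y}\}_{y\in(0,1)}$ satisfies the monotone likelihood ratio property, so that $f_{X\mid Y=v}/f_{X\mid Y=u}=\varphi_{u,v}(s^*)$ with $\varphi_{u,v}$ nondecreasing. Thus the likelihood ratio driving the two-point problem at levels $(u,v)$ is a nondecreasing transform of $s^*$, and $s^*$ never reverses the likelihood-ratio ordering. By the Neyman--Pearson argument recalled in Subsection \ref{subsec:bipartite}, the ROC curve of the bipartite problem $(F_{X\mid Y=u},F_{X\mid Y=v})$ is dominated everywhere on $(0,1)$ by the ROC curve of any scoring function ordinally consistent with that likelihood ratio, so the ROC curve of $s^*$ dominates that of $s$; integrating in $\alpha$ gives $A_{u,v}(s)\leq A_{u,v}(s^*)$.

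The main obstacle is precisely this two-point optimality, and two points require care. First, the AUC counts ties with weight one half, so the tie term $\tfrac12\mathbb{P}\{s(X)=s(X')\}$ must be carried rigorously through the Neyman--Pearson comparison rather than discarded. Second, $\varphi_{u,v}$ is only \emph{nondecreasing}, so $s^*$ may produce ties where the likelihood ratio does not; one must verify that such ties occur only among points with equal likelihood ratio—hence carrying no information to separate level $u$ from level $v$—so that breaking or keeping them does not push $A_{u,v}(s^*)$ below the Neyman--Pearson optimum. Relatedly, one must confirm that \emph{every} element of $\mathcal{S}^*$ (not only the distinguished scoring function furnished by Assertion 4 of Proposition \ref{prop:charact}) is ordinally consistent with each likelihood ratio $f_{X\mid Y=v}/f_{X\mid Y=u}$; this is exactly what the characterization in Proposition \ref{prop:charact} guarantees, and it is the step where the hypothesis $\mathcal{S}^*\neq\emptyset$ is essential.
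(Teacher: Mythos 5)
Your proof is correct, and its engine is the same as the paper's: disintegrate the Kendall $\tau$ into a nonnegative-weighted average of bipartite $\auc$'s, settle each one by Neyman--Pearson optimality under the monotone likelihood ratio property guaranteed by Proposition \ref{prop:charact}, then integrate. The decomposition itself is genuinely different, however. The paper conditions only on the \emph{upper} label: writing $d_{\tau}(s)=\mathbb{P}\{s(X')>s(X)\mid Y'>Y\}$ (ties are dispatched by assuming $s(X)$ continuous), it integrates over $Y'=y'$ the $\auc$ of the bipartite problem whose negative class is the \emph{mixture} $F_{X\mid Y<y'}$ and whose positive class is $F_{X\mid Y=y'}$, and argues that any $s^*\in\mathcal{S}^*$ is a uniformly most powerful test statistic for $\mathcal{H}_0:Y<y'$ against $\mathcal{H}_1:Y=y'$. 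You condition on \emph{both} labels, obtaining a convex combination over pairs $u<v$ of two-point $\auc$'s $A_{u,v}(s)$ with classes $F_{X\mid Y=u}$ and $F_{X\mid Y=v}$. Your finer decomposition buys two things: the pointwise step plugs directly into Assertion 4 of Proposition \ref{prop:charact} (MLR between two fixed levels), with no need to check that the mixture likelihood ratio $dF_{X\mid Y=y'}/dF_{X\mid Y<y'}$ is also an increasing transform of $s^*$; and ties are carried exactly throughout, with weights that genuinely integrate to one --- the paper's displayed disintegration in fact drops a factor $2F_Y(y')$, which is harmless (nonnegative weights suffice) but sloppy, whereas your bookkeeping is exact. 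One correction to your discussion of obstacles: since the likelihood ratio equals $\varphi_{u,v}(s^*)$, a tie of $s^*$ \emph{automatically} forces a tie of the likelihood ratio, so there is nothing to verify in that direction; the genuine (and resolvable) subtlety is the reverse one, namely that $s^*$ may strictly separate points of equal likelihood ratio when $\varphi_{u,v}$ has flat parts --- this leaves $A_{u,v}(s^*)$ at the optimum because the two class-conditional distributions are proportional on any likelihood-ratio level set, so any ordering within such a set contributes exactly $1/2$ per pair, just as ties counted at half weight do.
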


  Equipped with these criteria, the objective expressed above in an informal manner can be now formulated in a quantitative manner as a (possibly functional) $M$-estimation problem. In practice, the goal pursued is to find a reasonable approximation of a solution to the optimization problem
$ \max_{s\in \mathcal{S}}d_{\tau}(s)$ (respectively $\max_{s\in \mathcal{S}}\iauc(s)$),
 where the supremum is taken over the set of all scoring functions $s:\mathcal{X}\rightarrow \mathbb{R}$. Of course, these criteria are unknown in general, just like $(X,Y)$'s probability distribution, and the empirical risk minimization (ERM in abbreviated form) paradigm (see \cite{DGL96}) invites for maximizing the statistical version \eqref{eq:Kendall_emp} over a class $\mathcal{S}_0\subset \mathcal{S}$ of controlled complexity when considering the criterion $d_{\tau}(s)$ for instance.
The generalization capacity of empirical maximizers of the Kendall $\tau$ can be straightforwardly established using results in \cite{CLV08}. More details are given in the Supplementary Material.

Before describing a practical algorithm for recursive maximization of the $\iroc$ curve, a few remarks are in order.
\begin{remark}{\sc (On Kendall $\tau$ and $\auc$)} We point out that, in the bipartite ranking problem (\textit{i.e.}\ when the output variable $Z$ takes its values in $\{ -1,\; +1\}$, see subsection \ref{subsec:bipartite}) as well, the $\auc$ criterion can be expressed as a function of the Kendall $\tau$ related to the pair $(s(X), Z)$ when the r.v.\ $s(X)$ is continuous. Indeed, we have in this case $2p(1-p)\auc(s)=d_{\tau}(s)$, where $p=\mathbb{P}\{Z=+1 \}$ and $d_{\tau}(s)=\mathbb{P}\{  (s(X)-s(X'))\cdot (Z-Z')>0\}$, denoting by $(X',Z')$ an independent copy of $(X,Z)$.
\end{remark}
\begin{remark}\label{rk:reg}{\sc (Connection to distribution-free regression)} Consider the nonparametric regression model
$Y=m(X)+\epsilon$, where $\epsilon$ is a centered r.v.\ independent from $X$. In this case, it is well-known that the regression function $m(X)=\mathbb{E}[Y\mid X]$ is the (unique) solution of the expected least squares minimization. However, although $m\in \mathcal{S}^*$, the least squares criterion is far from appropriate to evaluate ranking performance, as depicted by Fig. \ref{fig:least_squares}. Observe additionally that, in contrast to the criteria introduced above, increasing transformation of the output variable $Y$ may have a strong impact on the least squares minimizer: except for linear stransforms, $\mathbb{E}[H(Y)\mid X]$ is not an increasing transform of $m(X)$. \end{remark}
\begin{remark}{\sc (On discretization)} Bi/multi-partite algorithms are not directly applicable to the continuous ranking problem. Indeed a discretization of the interval [0, 1] would be first required but this would raise a difficult question outside our scope: how to choose this discretization based on the training data? We believe that this approach is less efficient than ours which reveals problem-specific criteria, namely $\iroc$ and $\iauc$.
\end{remark}

\section{Continuous Ranking through Oriented Recursive Partitioning}\label{sec:rec_opt}

It is the purpose of this section to introduce the algorithm {\sc CRank}, a specific tree-structured learning algorithm for continuous ranking.

\subsection{Ranking trees and Oriented Recursive Partitions}\label{subsec:ranking_tree}
 Decision trees undeniably figure among the most popular techniques, in supervised and unsupervised settings, refer to \cite{cart84} or \cite{Quinlan} for instance. This is essentially due to the visual model summary they provide, in the form of a binary tree graphic that permits to describe predictions by means of a hierachichal combination  of elementary rules of the type "$X^{(j)}\leq \kappa$" or "$X^{(j)}>\kappa$", comparing the value taken by a (quantitative) component of the input vector $X$ (the \textit{split variable}) to a certain threshold (the \textit{split value}). In contrast to local learning problems such as classification or regression, predictive rules for a global problem such as \textit{ranking} cannot be described by a (tree-structured) partition of the feature space: cells (corresponding to the terminal leaves of the binary decision tree) must be ordered so as to define a scoring function. This leads to the definition of \textit{ranking trees} as binary trees equipped with a "left-to-right" orientation, defining a tree-structured collection of anomaly scoring functions, as depicted by Fig. \ref{fig:anom_tree}.
Binary ranking trees have been in the context of bipartite ranking in \cite{cv09ieee} or in  \cite{CDV13} and in \cite{CRV13} in the context of multipartite ranking.
The root node of a ranking tree $\mathcal{T}_J$ of depth $J\geq 0$ represents the whole feature space $\mathcal{X}$: $\mathcal{C}_{0,0}=\mathcal{X}$, while each internal node $(j,k)$ with $j<J$ and $k\in \{0,\; \ldots,\; 2^j -1 \}$ corresponds to a subset $\mathcal{C}_{j,k}\subset\X$, whose left and right siblings respectively correspond to disjoint subsets $\mathcal{C}_{j+1, 2k}$ and $\mathcal{C}_{j+1, 2k+1}$ such that $\mathcal{C}_{j,k}=\mathcal{C}_{j+1, 2k}\cup \mathcal{C}_{j+1, 2k+1}$. Equipped with the left-to-right orientation, any subtree $\mathcal{T}\subset \mathcal{T}_J$ defines a preorder on $\mathcal{X}$: elements lying in the same terminal cell of $\mathcal{T}$ being equally ranked. The scoring function related to the oriented tree $\mathcal{T}$ can be written as:

\begin{equation}\label{eq:score_tree}
s_{\mathcal{T}}(x)=\sum_{\mathcal{C}_{j,k}:\text{ terminal leaf of }\mathcal{T}}2^J\left (1-\frac{k}{2^j}\right)\cdot \mathbb{I}\{x\in \mathcal{C}_{j,k}  \}.
\end{equation}

 \begin{figure}[t]
\centering
 \includegraphics[height=4cm]{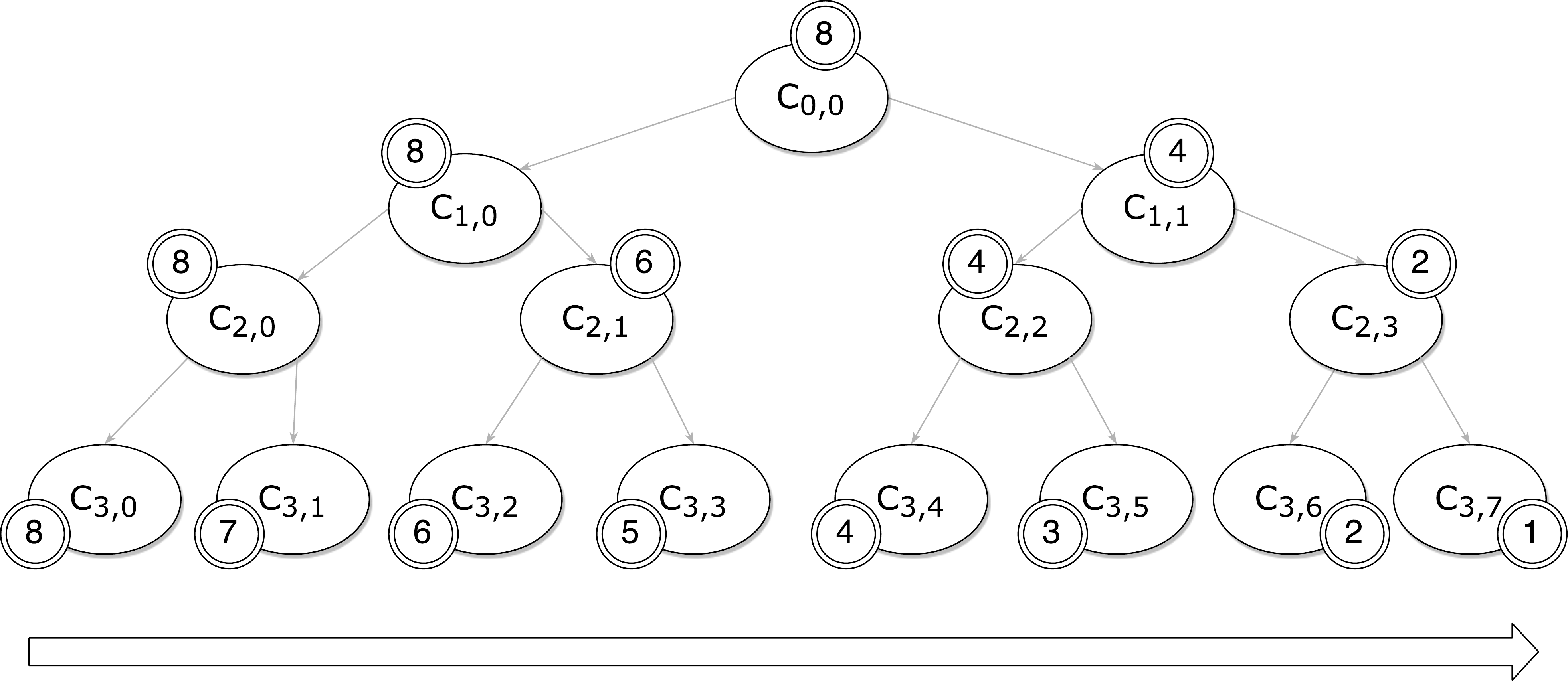}
 %\vskip -3cm
 \caption{A scoring function described by an oriented binary subtree $\mathcal{T}$. For any element $x\in \mathcal{X}$, one may compute the quantity $s_{\mathcal{T}}(x)$ very fast in a top-down fashion by means of the heap structure: starting from the initial value $2^J$ at the root node, at each internal node $\mathcal{C}_{j,k}$, the score remains unchanged if $x$ moves down to the left sibling, whereas one subtracts $2^{J-(j+1)}$ from it if $x$ moves down to the right.}
 \label{fig:anom_tree}
  %\vskip -1cm
 \end{figure}

\subsection{The {\sc CRank} algorithm}

Based on Proposition \ref{prop:Kendall_crit}, as mentioned in the Supplementary Material, one can try to build from the training dataset $\mathcal{D}_n$ a ranking tree by recursive empirical Kendall $\tau$ maximization. We propose below an alternative tree-structured recursive algorithm, relying on a (dyadic) discretization of the 'size' variable $Y$. At each iteration, the local sample (\textit{i.e.}\ the data lying in the cell described by the current node) is split into two halves (the highest/smallest halves, depending on $Y$) and the algorithm calls a binary classification algorithm $\mathcal{A}$ to learn how to divide the node into right/left children. The theoretical analysis of this algorithm and its connection with approximation of $\iroc^*$ are difficult questions that will be adressed in future work.
Indeed we found out that the $\iroc$ cannot be represented as a parametric curve contrary to the $\roc$, which renders proofs much more difficult than in the bipartite case.

\fbox{
\begin{minipage}[t]{13.5cm}
\medskip

{\small
\begin{center}
{\sc The CRank Algorithm}
\end{center}

\begin{enumerate}
\item {\bf Input.} Training data $\mathcal{D}_n$, depth $J\geq 1$, binary classification algorithm $\mathcal{A}$.
\item {\bf Initialization.} Set $\mathcal{C}_{0,0}=\mathcal{X}$.
\item {\bf Iterations.} For $j=0,\; \ldots,\; J-1$ and $k=0,\; \ldots,\; 2^J-1$,
\begin{enumerate}
\item Compute a median $y_{j,k}$ of the dataset $\{Y_1,\; \ldots,\;, Y_n  \}\cap \mathcal{C}_{j,k}$ and assign the binary label $Z_i=2\mathbb{I}\{ Y_i>y_{j,k}\}-1$ to any data point $i$ lying in $\mathcal{C}_{j,k}$, \textit{i.e.}\ such that $X_i\in \mathcal{C}_{j,k}$.
\item Solve the binary classification problem related to the input space $\mathcal{C}_{j,k}$ and the training set $\{(X_i,Y_i):\; 1\leq i \leq n,\; X_i\in \mathcal{C}_{j,k}  \}$, producing a classifier $g_{j,k}:\mathcal{C}_{j,k}\rightarrow \{-1,\; +1  \}$.
\item Set $\mathcal{C}_{j+1,2k}=\{x\in \mathcal{C}_{j,k},\;  g_{j,k}=+1  \}=\mathcal{C}_{j,k}\setminus\mathcal{C}_{j+1,2k+1}$.
\end{enumerate}
\item {\bf Output.} Ranking tree $\mathcal{T}_J=\{\mathcal{C}_{j,k}:\; 0\leq j\leq J,\; 0\leq k< D  \}$.
\end{enumerate}
}
\medskip

\end{minipage}
}

Of course, the depth $J$ should be chosen such that $2^J\leq n$. One may also consider continuing to split the nodes until the number of data points within a cell has reached a minimum specified in advance. In addition, it is well known that recursive partitioning methods fragment the data and the unstability of splits increases with the depth. For this reason, a ranking subtree must be selected. The growing procedure above should be classically followed by a pruning stage, where children of a same parent are progressively merged until the root $\mathcal{T}_0$ is reached and a subtree among the sequence $\mathcal{T}_0\subset \ldots\subset \mathcal{T}_J$ with nearly maximal $\iauc$ should be chosen using cross-validation. Issues related to the implementation of the {\sc CRank} algorithm and variants (\textit{e.g.} exploiting randomization/aggregation) will be investigated in a forthcoming paper.

\section{Numerical Experiments}\label{sec:experiments}

In order to illustrate the idea conveyed by Fig. \ref{fig:least_squares} that the least squares criterion is not appropriate for the continuous ranking problem
we compared on a toy example \textsc{CRank} with \textsc{CART}. Recall that the latter is a regression decision tree algorithm which minimizes the MSE (Mean Squared Error).
We also runned an alternative version of \textsc{CRank} which maximizes the empirical Kendall $\tau$ instead of the empirical $\iauc$: this method is refered to as \textsc{Kendall} from now on.
The experimental setting is composed of a unidimensional feature space $\mathcal{X}=[0, 1]$ (for visualization reasons) and a simple regression model without any noise: $Y=m(X)$.
Intuitively, a least squares strategy can miss slight oscillations of the regression function, which are critical in ranking when they occur in high probability regions as they affect the order among the feature space.
The results are presented in Table \ref{tab:performance}. See Supplementary Material for further details.

\begin{table}[H]
\begin{center}
\begin{tabular}{l | c c c}
\  & $\iauc$ & Kendall $\tau$ & MSE \\
\textsc{CRank} & $\color{red}{0.95}$ & $0.92$ & $0.10$ \\
\textsc{Kendall} & $0.94$ & $\color{red}{0.93}$ & $0.10$ \\
\textsc{CART} & $0.61$ & $0.58$ & $\color{red}{7.4 \times 10^{-4}}$ \\
\end{tabular}
\vspace{.1in}
\caption{$\iauc$, Kendall $\tau$ and MSE empirical measures}\label{tab:performance}
\end{center}
\end{table}

\section{Conclusion}\label{sec:concl}

This paper considers the problem of learning how to order objects by increasing 'size', modeled as a continuous r.v.\ $Y$, based on \textit{indirect measurements} $X$. We provided a rigorous mathematical formulation of this problem that finds many applications (\textit{e.g.} quality control, chemistry) and is referred to as \textit{continuous ranking}. In particular, necessary and sufficient conditions on $(X,Y)$'s distribution for the existence of optimal solutions are exhibited and appropriate criteria have been proposed for evaluating the performance of scoring rules in these situations.
In contrast to distribution-free regression where the goal is to recover the local values taken by the regression function, \textit{continuous ranking} aims at reproducing the preorder it defines on the feature space as accurately as possible.
The numerical results obtained via the algorithmic approaches we proposed for optimizing the criteria aforementioned highlight the difference in nature between these two statistical learning tasks.

\section*{Acknowledgments}
This work was supported by the industrial chair \textit{Machine Learning for Big Data} from T\'el\'ecom ParisTech
and by a public grant (\textit{Investissement d'avenir} project, reference ANR-11-LABX-0056-LMH, LabEx LMH).

\bibliographystyle{plain}

\bibliography{Ranking}

\newpage

\section*{Appendix - Technical Proofs}

\subsection*{Proof of Proposition \ref{prop:charact}}
Observe first that $3. \Rightarrow 2.$ and $1.\Leftrightarrow 4.$ are obvious.\\
$2. \Rightarrow 1.$: Let us assume that assertion $2.$ is true. Let $(x,x')\in \mathcal{X}^2$ and $y\in (0, 1)$ such that $\Phi_y(x)<\Phi_y(x')$. Then, from assumption $2.$, $s^*(x)<s^*(x')$.
For $t'\in (y, 1)$, if $\Phi_{y'}(x)>\Phi_{y'}(x')$, it leads to the following contradiction: $s^*(x)>s^*(x')$. Hence $\Phi_{y'}(x)\le \Phi_{y'}(x')$.\\
$1. \Rightarrow 3.$: Let us assume that assertion $1.$ is true. Let $(x,x')\in \mathcal{X}^2$ and $y\in (0, 1)$ such that $\eta_y(x)<\eta_y(x')$.
Observe that $(x, y')\mapsto \eta_{y'}(x)$ is continuous.
It follows from assumption $1.$ that for $y'\in (0, 1)$, $\eta_{y'}(x)\le \eta_{y'}(x')$ with strict inequality on a nonempty interval by continuity of $(x, y')\mapsto \eta_{y'}(x)$.
Integrating the latter inequality against the uniform distribution over $(0, 1)$ leads to $m(x)<m(x')$.

\subsection*{Proof of Theorem \ref{thm:opt}}
The implications $1. \Rightarrow 2.$ and $2. \Rightarrow 3.$ are obvious.\\
$3. \Rightarrow 1.$: Let us assume that assertion $3.$ is true. Assume ad absurdum that $1.$ is false. Then there exists $y\in (0,1)$ s.t. $\auc_y(s^*) < \auc_y(\eta_y)$.
Notice that $(x, y')\mapsto \eta_{y'}(x)$ and, for any scoring function $s$, $y'\mapsto \auc_{y'}(s)$ are continuous.
By integration w.r.t.\ $F_Y$ we obtain $\iauc(s^*)<\mathbb{E}\left[ \auc^*_{Y} \right ]$, which contradicts assertion $3$.
Hence $1.$ is true.

\subsection*{Proof of Lemma \ref{lem:iauc}}
Recall that, for any $s\in \mathcal{S}$ and all $y\in (0,1)$, we have:
$$
\auc_y(s)=\mathbb{P}\left\{  s(X)<s(X')\mid Y<y<Y'    \right\}+\frac{1}{2}\mathbb{P}\left\{  s(X)=s(X')\mid Y<y<Y'    \right\}.
$$
Integrating the terms in the equation above w.r.t.\ $F_Y(dy)$ leads to the desired formula.
Then, a natural empirical version of $\iauc(s)$ is:
\begin{equation*}
  \begin{split}
    \widehat{\iauc}_n(s) &= \frac{6}{n(n-1)(n-2)}\sum_{(i, j, k)}  \mathbb{I}\left\{ s(X_i)<s(X_k), Y_i<Y_j<Y_k \right\}\\
    &\quad+\frac{3}{n(n-1)(n-2)}\sum_{(i, j, k)} \mathbb{I}\left\{s(X_i)=s(X_k), Y_i<Y_j<Y_k  \right\}.
  \end{split}
\end{equation*}
The asymptotic and nonasymptotic study of the deviation of $\widehat{\iauc}_n$ will be the subject of future work.

\subsection*{Proof of Proposition \ref{prop:Kendall_crit}}
We assume that $s(X)$ is a continuous r.v.\ for simplicity, the slight modifications needed to extend the argument to the general framework being left to the reader. As a first go, observe that
\begin{eqnarray*}
d_{\tau}(s)&=& \mathbb{P}\{s(X')>s(X)\mid Y'>Y  \}=\int_{y'=0}^1\mathbb{P}\left\{s(X')>s(X)  \mid Y'=y',\; Y<y'\right\}  F_Y(dy')
\end{eqnarray*}
Notice next that, for any $y'\in (0,1)$, $\mathbb{P}\left\{s(X')>s(X)  \mid Y'=y',\; Y<y'\right\}$ is nothing else than the $\auc$ criterion of $s(x)$ related to the distribution of $X$ given $Y<y'$ (negative distribution) and $F_{X\mid Y=y'}$ (positive distribution). Since we assumed $\mathcal{S}^*\neq \emptyset$, the collection $\{F_{X\mid Y=y}:\; y\in (0,1)  \}$ is of increasing likelihood ratio and according to Theorem \ref{prop:charact}, any $s^*\in \mathcal{S}^*$
is a Neyman Pearson test statistic and thus defines uniformly most powerful tests (among unbiased tests) of $\mathcal{H}_0:Y<y$ against $\mathcal{H}_1:Y=y$. Hence, for any $y'\in (0,1)$, $\mathbb{P}\left\{s(X')>s(X)  \mid Y'=y',\; Y<y'\right\}\leq \mathbb{P}\left\{s^*(X')>s^*(X)  \mid Y'=y',\; Y<y'\right\}$. Integrating over $y'$ w.r.t.\ $F_Y$ yields the desired result.

\subsection*{On Empirical Kendall $\tau$ Maximization}
Here we state a result describing the performance of scoring rules obtained through maximization of the empirical Kendall $\tau$ over a class $\mathcal{S}_0\subset \mathcal{S}$ of controlled complexity. An empirical $Kendall$ $\tau$ maximizer over $\mathcal{S}_0$ is any scoring function $\widehat{s}_n\in \mathcal{S}_0$ s.t.
\begin{equation}
\widehat{d}_n(\widehat{s}_n)=\max_{s\in \mathcal{S}_0}\widehat{d}_n(s).
\end{equation}
\begin{theorem} Suppose that $\mathcal{S}^*\neq \emptyset$ and set $d_{\tau}^*=d_{\tau}(s^*)$ for $s^*\in \mathcal{S}^*$. Assume that $\mathcal{S}_0$ is a {\rm VC} major class of functions with {\rm VC} dimension $V<+\infty$. Let $\delta\in (0,1)$. With probability at least $1-\delta$, we have:
\begin{equation}
d^*_{\tau}-d_{\tau}(\widehat{s}_n)\leq c\sqrt{\frac{V}{n}}+4\sqrt{\frac{\log(1/\delta)}{n-1}} +\left\{ d_{\tau}^*- \max_{s\in \mathcal{S}_0}d_{\tau}(s)  \right\}.
\end{equation}
\end{theorem}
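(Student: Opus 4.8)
The plan is to follow the classical empirical risk minimization scheme, treating $\widehat{d}_n(s)$ as a $U$-statistic and invoking the maximal-deviation machinery for $U$-processes developed in \cite{CLV08}. First I would observe that the kernel
\[
h_s\big((x,y),(x',y')\big)=\mathbb{I}\{(s(x)-s(x'))(y-y')>0\}+\tfrac{1}{2}\mathbb{I}\{s(x)=s(x')\}
\]
is symmetric in its two arguments and takes values in $[0,1]$, so that $\widehat{d}_n(s)$ defined in \eqref{eq:Kendall_emp} is a genuine $U$-statistic of degree $2$ with $\mathbb{E}[\widehat{d}_n(s)]=d_\tau(s)$, in agreement with \eqref{eq:Kendall_true}. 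The object to control is therefore the $U$-process $\{\widehat{d}_n(s)-d_\tau(s):s\in\mathcal{S}_0\}$.

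Next I would split the excess via the standard decomposition. Writing $\bar d=\max_{s\in\mathcal{S}_0}d_\tau(s)$, we have
\[
d_\tau^*-d_\tau(\widehat{s}_n)=\big\{d_\tau^*-\bar d\big\}+\big\{\bar d-d_\tau(\widehat{s}_n)\big\},
\]
the first bracket being exactly the approximation-error term appearing in the statement. For the estimation error, pick $\bar s\in\mathcal{S}_0$ attaining $\bar d$ and use the optimality $\widehat{d}_n(\widehat{s}_n)\geq\widehat{d}_n(\bar s)$ to telescope
\[
\bar d-d_\tau(\widehat{s}_n)\leq\{d_\tau(\bar s)-\widehat{d}_n(\bar s)\}+\{\widehat{d}_n(\widehat{s}_n)-d_\tau(\widehat{s}_n)\}\leq 2\sup_{s\in\mathcal{S}_0}\big|\widehat{d}_n(s)-d_\tau(s)\big|.
\]
Everything thus reduces to a uniform deviation bound for the $U$-process over $\mathcal{S}_0$.

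Then I would bound $\sup_{s\in\mathcal{S}_0}|\widehat{d}_n(s)-d_\tau(s)|$ in two stages, exactly as in \cite{CLV08}. For the expectation, the Hoeffding decomposition represents $\widehat{d}_n(s)-d_\tau(s)$ as the sum of a linear (first projection) term and a degenerate second-order term; the former is an ordinary empirical process whose supremum is controlled by the complexity of $\mathcal{S}_0$, while the latter is of smaller order $O(1/n)$. Here the \emph{VC major} assumption is crucial: it guarantees that the sub-level sets $\{x:s(x)\geq t\}$ form a VC class of dimension $V$, whence the class of pairwise events $\{(x,x'):s(x)<s(x')\}$ indexing the kernels $h_s$ has VC dimension controlled by $V$, yielding $\mathbb{E}\big[\sup_{s\in\mathcal{S}_0}|\widehat{d}_n(s)-d_\tau(s)|\big]\leq c'\sqrt{V/n}$ for a universal constant $c'$. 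For the confidence term, I would apply a bounded-differences argument to $\Psi(\mathcal{D}_n)=\sup_{s\in\mathcal{S}_0}|\widehat{d}_n(s)-d_\tau(s)|$: since each kernel lies in $[0,1]$, modifying a single observation changes any $\widehat{d}_n(s)$ by at most $(n-1)/\binom{n}{2}=2/n$ uniformly in $s$, so McDiarmid's inequality (or, equivalently, a Hoeffding-type tail bound for $U$-statistics, which produces the $n-1$ denominator directly) gives $\Psi\leq\mathbb{E}[\Psi]+\sqrt{2\log(1/\delta)/n}$ with probability at least $1-\delta$.

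Finally, combining the two stages, absorbing the factor $2$ from the ERM step into the constants, and using $n\geq n-1$ produces the announced bound: the $c\sqrt{V/n}$ term comes from the complexity estimate, the $4\sqrt{\log(1/\delta)/(n-1)}$ term from the concentration step, and the residual $\{d_\tau^*-\max_{s\in\mathcal{S}_0}d_\tau(s)\}$ passes through unchanged. I expect the main obstacle to be the complexity-control step, namely translating the VC major property of $\mathcal{S}_0$ into a usable dimension bound on the kernel-indexed class of pairwise events and correctly handling the degenerate part of the Hoeffding decomposition; this is precisely the technical core supplied by \cite{CLV08}, while the decomposition and concentration steps are then routine.
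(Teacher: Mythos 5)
Your proposal is correct and follows essentially the same route as the paper's proof: the same starting bound $d_\tau^*-d_\tau(\widehat{s}_n)\leq 2\sup_{s\in\mathcal{S}_0}\vert\widehat{d}_n(s)-d_\tau(s)\vert+\{d_\tau^*-\max_{s\in\mathcal{S}_0}d_\tau(s)\}$, followed by concentration for the $U$-process indexed by $\mathcal{S}_0$ in the manner of Corollary 3 of \cite{CLV08}. The paper leaves the $U$-process step as a citation, whereas you spell out its ingredients (Hoeffding decomposition, VC-major control of the first-order term, negligibility of the degenerate part, and a McDiarmid/Hoeffding tail bound yielding the $n-1$ denominator), which is exactly what ``mimicking'' that corollary entails.
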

\begin{proof}
The argument is based on the simple bound
$$
d^*_{\tau}-d_{\tau}(\widehat{s}_n)\leq 2\sup_{s\in \mathcal{S}_0} \left\vert \widehat{d}_n(s)-d_{\tau}(s) \right\vert    +\left\{ d_{\tau}^*- \max_{s\in \mathcal{S}_0}d_{\tau}(s)  \right\},
$$
combined with the use of concentration results for the $U$-process $\{ \widehat{d}_n(s)-d_{\tau}(s)  \}_{s\in }\mathcal{S}_0$. The proof is finished by mimicking that of Corollary 3 in \cite{CLV08}.
\end{proof}
From a computational perspective, maximizing $\widehat{d}_n$ is a challenge, the optimization problem being NP-hard due to the absence of convexity/smoothness of the pairwise loss function $\mathbb{I}\{ (s(x)-s(x'))(y-y')>0 \}$. Whereas replacing this loss by a surrogate loss, more suited to continuous optimization, is a possible strategy, using greedy algorithms in the spirit of the popular {\sc CART} method can also be considered for this purpose.  A slight modification of {\sc CART} based on recursive maximization of the empirical Kendall $\tau$ criterion (rather than the Gini index or the least squares criterion) permit to build an \textit{oriented ranking tree} in a top down manner, see subsection \ref{subsec:ranking_tree}. Just like for classification/regression, the procedure can be followed by a pruning stage (model selection), based here on (\textit{e.g. }cross-validation based) estimates of Kendall $\tau$.

\section*{Appendix - Additional Remarks}

\subsection*{Properties of $\iroc$ curves}

For any scoring function $s\in\mathcal{S}$ and $y\in(0, 1)$, we define the conditional cdfs of $s(X)$ as follows:
\begin{equation*}
  H_{s, y}(v) = \mathbb{P}(s(X)\le v \mid Y<y),
\end{equation*}
\begin{equation*}
  G_{s, y}(v) = \mathbb{P}(s(X)\le v \mid Y>y).
\end{equation*}
Now we give some properties of the $\iroc$ curve which are easily derived from $\roc$ curve properties by integration over bipartite ranking subproblems.
\begin{theorem} For any scoring function $s\in\mathcal{S}$, the following properties hold:
\begin{itemize}
\item \textbf{Limit values.} We have $\iroc_s(0)=0$ and $\iroc_s(1)=1$.
\item \textbf{Invariance.} For any strictly increasing funciton $T: \mathbb{R}\rightarrow\mathbb{R}$, we have for all $\alpha\in (0, 1)$, $\iroc_{T\circ s}(\alpha)=\iroc_{s}(\alpha)$.
\item \textbf{Concavity.} If for all $y\in (0, 1)$ the likelihood ratio $dG_{s, y}/dH_{s, y}$ is a monotone function, then the $\iroc$ curve is concave.
\end{itemize}
\end{theorem}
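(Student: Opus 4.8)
The plan is to exploit the defining representation $\iroc_s(\alpha)=\int_{y=0}^1 \roc_{s,y}(\alpha)\,F_Y(dy)$ and to establish each of the three items by first verifying the analogous statement for the individual $\roc_{s,y}$ attached to the bipartite subproblems $(X,Z_y)$, and then integrating over $y$ against the probability measure $F_Y$. Since $F_Y$ has total mass one and is nonnegative, integration preserves pointwise bounds, pointwise equalities, and (as a mixture) concavity, so all three claims reduce to standard facts about a single $\roc$ curve. Throughout I would keep $\alpha\in(0,1)$ fixed when needed and use that $y\mapsto \roc_{s,y}(\alpha)$ is measurable, which is already implicit in the notation $\mathbb{E}[\roc_{s,Y}(\alpha)]$ used to define the criterion.

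For the \textbf{limit values}, I would use that every bipartite $\roc$ curve satisfies $\roc_{s,y}(0)=0$ and $\roc_{s,y}(1)=1$; integrating the constants $0$ and $1$ against $F_Y$ yields $\iroc_s(0)=0$ and $\iroc_s(1)=1$ at once. For the \textbf{invariance}, the key observation is that the subproblems $(X,Z_y)$ depend only on $Y$ and not on the scoring function, and that each bipartite $\roc$ curve depends on $s$ solely through the preorder $\preceq_s$ it induces on $\mathcal{X}$; since a strictly increasing $T$ leaves this preorder unchanged, one has $\roc_{T\circ s,y}=\roc_{s,y}$ for every $y$, and integrating against $F_Y$ gives $\iroc_{T\circ s}(\alpha)=\iroc_s(\alpha)$ for all $\alpha$.

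The \textbf{concavity} item is the one demanding most care. First I would invoke the classical Neyman--Pearson fact that, for the subproblem indexed by $y$, monotonicity of the likelihood ratio $dG_{s,y}/dH_{s,y}$ forces $\roc_{s,y}$ to coincide with its concave envelope, i.e.\ to be a concave function of $\alpha$ on $(0,1)$. Granting this for every $y\in(0,1)$ under the stated hypothesis, concavity of $\iroc_s$ follows because a mixture of concave functions is concave: for $\alpha_1,\alpha_2\in(0,1)$ and $\lambda\in[0,1]$, the inequality $\roc_{s,y}(\lambda\alpha_1+(1-\lambda)\alpha_2)\geq \lambda\,\roc_{s,y}(\alpha_1)+(1-\lambda)\,\roc_{s,y}(\alpha_2)$ holds for each $y$ and is preserved upon integrating against the nonnegative measure $F_Y$. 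The main obstacle is thus concentrated entirely in justifying the per-subproblem concavity from the monotone likelihood ratio assumption; once that standard result is in hand, the integration step, and with it the other two properties, is routine.
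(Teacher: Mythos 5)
Your proposal is correct and follows essentially the same route as the paper: the paper's proof likewise invokes the standard per-subproblem $\roc$ properties (citing Proposition 24 of \cite{cv09ieee}) for each $y\in(0,1)$ and then integrates over $y$ with respect to $F_Y$. Your additional remarks on measurability of $y\mapsto\roc_{s,y}(\alpha)$ and on mixtures preserving concavity simply make explicit what the paper leaves implicit.
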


\begin{proof}
  Use Proposition 24 in \cite{cv09ieee} for each bipartite ranking subproblem at level $y\in(0, 1)$. Then integrate over $y$ w.r.t.\ $F_Y$.
\end{proof}

\subsection*{Distribution-free regression vs continuous ranking}
\begin{figure}[H]
\centering
  \begin{tikzpicture}[domain=0:2,label/.style={%
   postaction={ decorate,
   decoration={ markings, mark=at position .5 with \node #1;}}}]
  \draw [help lines] (-2,0) (2,4);
  \draw [->] (-2,0) -- (0, 0) node (xaxis) [below] {$x$} node[above=1cm,pos=1,red] {$m(x)$} -- (2.2,0);
  \draw [->] (-2,-0) -- (-2, 4.2);
  \draw [domain=-2:2, red] plot (\x, {0.5*(4-\x*\x)});
  \draw [dashed, domain=-2:2, red, label={[above]{$s^*(x)$}}] plot (\x, {0.5*(4-\x*\x)+1.5});
  \draw [densely dotted, domain=-2:2, samples=500, blue, label={[above]{$s_{\text{LS}}(x)$}}] plot (\x, {0.5*(4-\x*\x)+0.3*cos(3*pi*\x r)});
  \end{tikzpicture}
  \caption{The least squares regressor $s_{\text{LS}}$ (dotted line) better approximates, in terms of mean squared error, the regression function $m$ (solid line) than $s^*$ (dashed line) does. Still, the latter is optimal for the ranking task as it is a strictly increasing transform of $m$.}
  \label{fig:least_squares}
\end{figure}

\subsection*{Numerical Experiments (Figures)}

We considered a polynomial regression function $m$ over $[0, 1]$ and valued in $[0, 1]$, namely:
$$
m(x) = \frac{P(x)-P(0)}{P(1)-P(0)},
$$
where the polynomial function $P$ is given by:
$$
P(x) = z^2\cdot(z+1)\cdot(z+1.5)\cdot(z+2), \quad \text{with} \quad z=25\cdot(x-0.5).
$$
Observe that $m$ slightly oscillates in the interval $I_2=[0.415, 0.51]$ (see \ref{fig:polynomial_zoom}).
With respective probabilities $p_1=0.1$, $p_2=0.8$ and $p_3=0.1$, $X$ is uniformly sampled in one of the three intervals $I_1=[0, 0.415]$, $I_2$ and $I_3=[0.51, 1]$: the critical window $I_2$ is then a high probability region.
The three algorithms (\textsc{CRank}, \textsc{Kendall} and \textsc{CART}) where trained on the same dataset $(X_1, Y_1), \dots, (X_{n_\text{train}}, Y_{n_\text{train}})$ with $Y_i=m(X_i)$ and $n_\text{train}=100$ with the same constraint on the depth of the tree: at most $D = 3$.
Then we tested them on $n_\text{test}=2000$ new iid copies of $X$.
In Fig. \ref{fig:simu} we plot the polynomial function $m$ and piecewise constant scoring functions provided by the three approaches.

We observe in Fig. \ref{fig:simu} that \textsc{CRank} and \textsc{Kendall} almost provide the same ranking functions ($s_{\textsc{CRank}} \approx s_{\textsc{Kendall}}$) and achieve similar performance (see Fig. \ref{tab:performance}).
Also notice in Fig. \ref{tab:performance} that \textsc{CRank}, \textsc{Kendall} and \textsc{CART} respectively achieve maximum $\iauc$, Kendall $\tau$ and MSE.
As expected, \textsc{CART} misses the critical oscillations that is why its $\iauc$ and Kendall $\tau$ are considerably lower than for its concurrents.

\begin{figure}[H]
\begin{subfigure}{.5\textwidth}
  \centering
  \includegraphics[height=.8\linewidth]{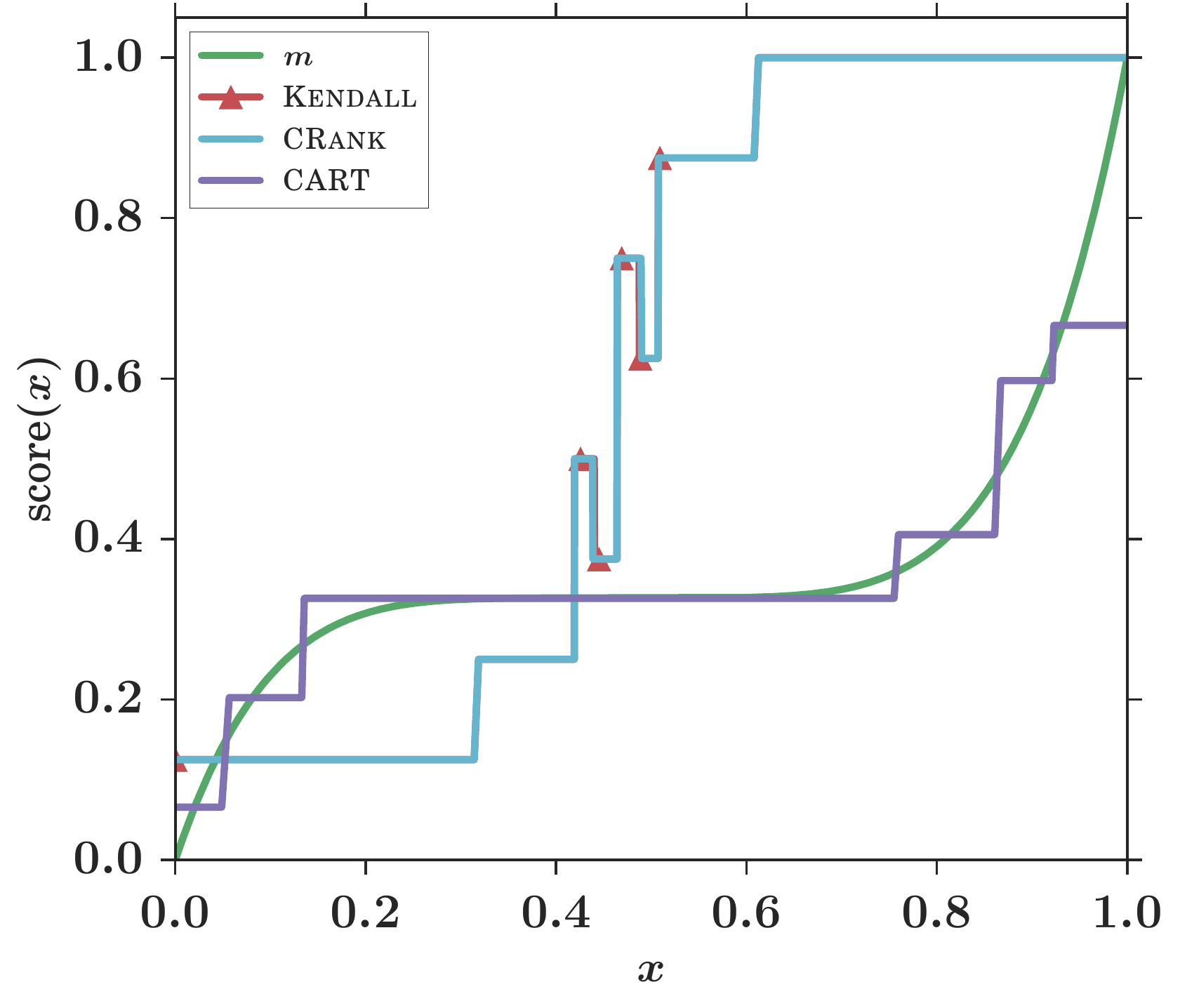}
  \caption{}
  \label{fig:polynomial}
\end{subfigure}
\begin{subfigure}{.5\textwidth}
  \centering
  \includegraphics[height=.85\linewidth]{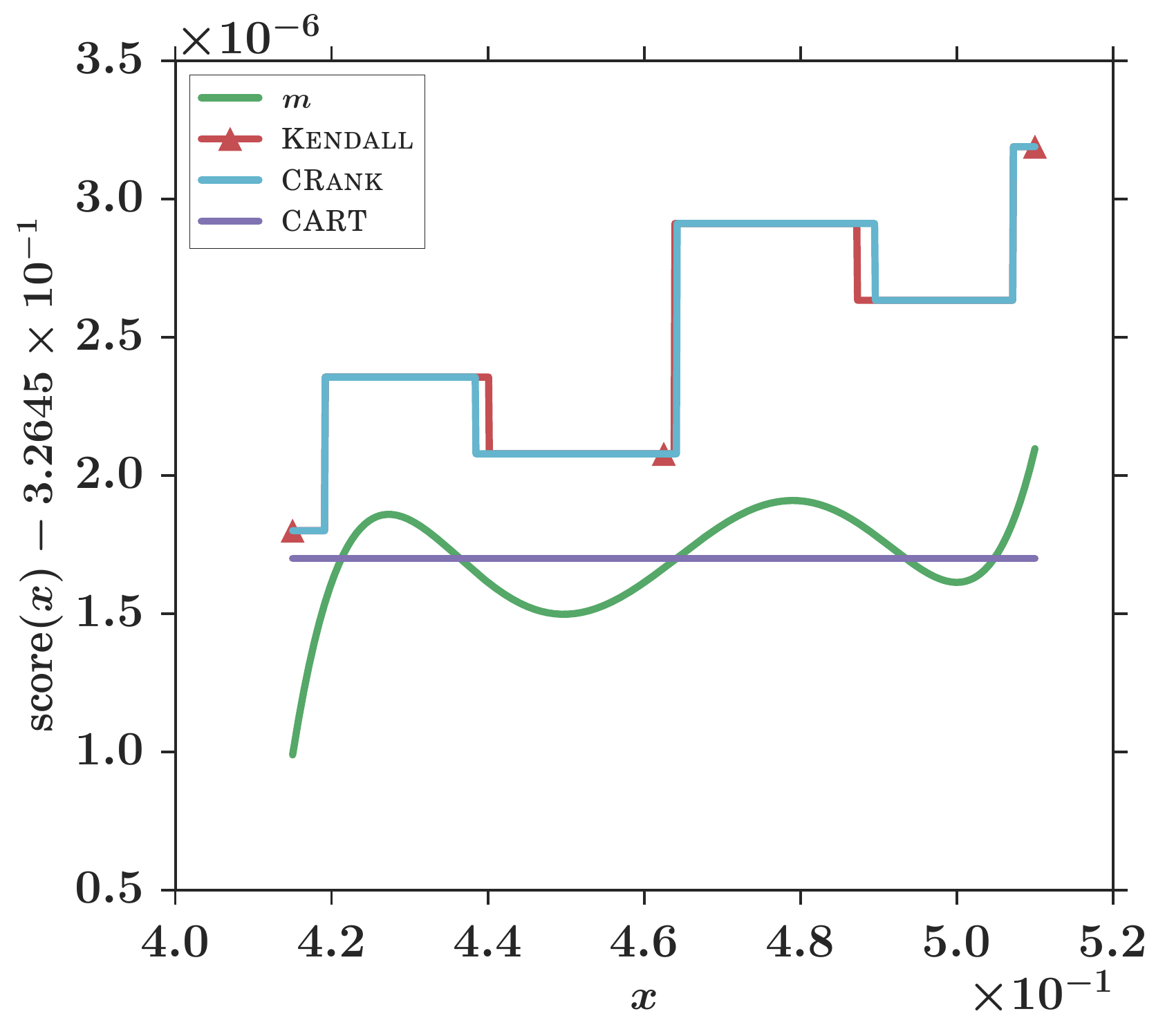}
  \caption{}
  \label{fig:polynomial_zoom}
\end{subfigure}
\caption{Polynomial regression function $m$ and scoring functions provided by \textsc{CRank}, \textsc{Kendall} and \textsc{CART}.
For visualization reasons, $s_{\textsc{CRank}}$ and $s_{\textsc{Kendall}}$ have been renormalized by $2^D=8$ to take values in $[0, 1]$ and, in Fig. \ref{fig:polynomial_zoom}, affine functions have been applied to the three scoring functions.}
\label{fig:simu}
\end{figure}

\end{document}